\newtheorem{theorem}{Theorem}
\def\BibTeX{{\rm B\kern-.05em{\sc i\kern-.025em b}\kern-.08em
    T\kern-.1667em\lower.7ex\hbox{E}\kern-.125emX}}
\pgfplotsset{compat=1.12}
\newcommand\footnoteref[1]{\protected@xdef\@thefnmark{\ref{#1}}\@footnotemark}
\DeclareMathSymbol{\shortminus}{\mathbin}{AMSa}{"39}
\newcommand*\widefbox[1]{\fbox{\hspace{2em}#1\hspace{2em}}}
\title{An Input-Output Feedback Linearization based Exponentially Stable Controller for Multi-UAV Payload Transport}
\author{Nishanth Rao, Suresh Sundaram, \IEEEmembership{Senior Member, IEEE}
\thanks{Nishanth Rao is with the Artificial Intelligence and Robotics Lab, Department of Aerospace Engineering, Indian Institute of Science, Bengaluru, India, 560012. \href{nishanthrao@iisc.ac.in}{\texttt{nishanthrao@iisc.ac.in}}}
\thanks{Suresh Sundaram is with the Artificial Intelligence and Robotics Lab, Department of Aerospace Engineering, Indian Institute of Science, Bengaluru, India, 560012. \href{vssuresh@iisc.ac.in}{\texttt{vssuresh@iisc.ac.in}}}}
\begin{document}

\maketitle

\begin{abstract}
In this paper, an exponentially stable trajectory tracking controller is proposed for multi-UAV payload transport. The multi-UAV payload system has a 2-DOF magnetic spherical joint between the UAVs and the vertical rigid links of the payload frame, so the UAVs can roll or pitch freely. These vertical links are rigidly attached to the payload and cannot move. An input-output feedback linearized model is derived for the complete payload-UAV system along with thrust vectoring control for trajectory tracking of the payload. The theoretical analysis on tracking control laws shows that control law is exponentially stable, thus guaranteeing safe transportation along the desired trajectory. To validate the performance of the proposed control law, the results for a numerical simulation as well as a high-fidelity \texttt{Gazebo} real-time simulation are presented. Next, the robustness of the proposed controller is analysed against two practical situations: External disturbance on the payload and payload mass uncertainty. The results clearly indicate that the proposed controller is robust and computationally efficient while achieving exponentially stable trajectory tracking.
\end{abstract}

\begin{IEEEkeywords}
Payload-UAV system, Input-Output Feedback Linearization, Thrust Vectoring Control, Exponentially Stable Tracking, High Fidelity Software-In-Loop simulation
\end{IEEEkeywords}

\section{Introduction}
Technological furtherance in Unmanned Aerial Vehicle (UAV) research has led to advancements in urban air mobility, air logistics, and air ambulances \cite{bertram2019online}, \cite{bauranov2021designing}. A fundamental requirement in these operations is the autonomous transportation of safety-critical payloads (like medical equipment, etc.) from one place to another based on the demand. In particular, due to the versatility of payload characteristics in air-logistics operations, utilising multiple UAVs is often more efficient and economical than having individual UAVs of varying form factors and payload capacity. Consequently, it is crucial to develop algorithms for collaborative transportation of payloads using a swarm of UAVs. Note that these algorithms must be able to govern \textit{different phases} that occur during the transportation process, like individual UAVs moving towards the payload in a particular formation, attaching themselves to the payload automatically, transporting the payload to the desired location and finally detach and fly away to carry on with their individual assigned tasks.

Recently in the literature, researchers assume that the UAVs are already attached to a rigid payload and then developed a model-based tracking controller for cooperative payload transportation. These works can be broadly classified into two categories based on the link suspension employed to connect the UAVs to the payload: Cable suspension or rigid-link suspension. In \cite{sreenath2013dynamics}, a cable-suspended rigid body is considered, and a controller is designed based on the \textit{differential flatness} of the system for the purpose of trajectory tracking. In \cite{lee2017geometric}, a geometric controller is derived for tracking the payload's trajectory using cable suspensions. In \cite{rastgoftar2018cooperative}, the payload is carried by a team of UAVs that are connected via cable-like lightweight \textit{tensegrity structures}, that can handle tension and compressive forces. The feedback linearization technique is applied to only UAVs for position control and assumes the forces from the payload via \textit{external force model}. A PD controller is developed for yaw tracking of the UAVs. However, using cable suspension for payload transportation can be problematic because it can be challenging to control the payload, especially when it swings. Moreover, the problem exacerbates when the cable becomes slack, causing singularities in the mathematical model, as described in \cite{sreenath2013dynamics}. Further, obtaining the orientation of the cable and thus the force direction is a difficult estimation problem in practice. On the contrary, using rigid-link suspensions can alleviate these problems. Moreover, rigid links provide a better control authority over the payload to the UAVs and thus, stabilizing the payload is easier. In \cite{wehbeh2020distributed}, a distributed Model Predictive Controller (MPC) is designed for trajectory tracking of the payload that is suspended using rigid links. It is shown that the distributed MPC performs similar to the centralized MPC only if the controller frequency is increased. In \cite{rao2022integrated}, the \textit{exponential barrier functions} are employed along with a centralised linear MPC for dynamic obstacle avoidance in a cluttered environment.

In \cite{mellinger2013cooperative}, different phases that arise in payload transportation are described where individual UAVs come together to grasp a payload using a \textit{vacuum gripping mechanism} in various flight configurations. Here, the form factor of the payload is similar to, if not smaller than, the UAVs themselves. Nonetheless, when there is a bigger and heavier payload to be transported, the gripping mechanism can have many disadvantages, particularly the loss of thrust and the limited attitude freedom of the UAVs, as they attach too close to the payload surface. For the UAVs to freely roll/pitch and to ensure there is no loss of thrust, the UAVs must attach themselves at a certain height above the payload surface.

When there are critical payloads that must be transported in emergency situations, the controller must track the reference trajectory precisely, which must be guaranteed theoretically. In addition, one must also consider developing a \textit{complete control} approach that governs the different phases during the transportation process, like governing the behaviour of independent UAVs while coming into formation above the payload for precise attachment process and the transportation of the payload to its destination. Further, the controller tracking performance must be \textit{exponentially stable} to guarantee the precise transportation of safety-critical payload along a set trajectory.

In this paper, a novel feedback-linearization based nonlinear control algorithm for both the individual UAV trajectory tracking and cooperative safety-critical payload transport is proposed to address different phases that arise in a typical payload transportation problem. The UAVs are connected to the payload frame in cooperative operation through a vertical rigid link suspension system. It is assumed that there is a 2-DOF magnetic spherical joint between the UAV and the vertical rigid links. This allows the UAVs to roll/pitch freely. The vertical rigid links ensure that the UAVs attach at a height above the payload, mitigating the issues of thrust loss. The control structure employs feedback-linearization for individual UAV trajectory tracking and multiple-UAV cooperative payload trajectory tracking and transport. For the UAVs, the \textit{exact-feedback linearization} procedure as described by \cite{mistler2001exact} is employed. In the case of a multi-UAV payload system, the system dynamics are derived firstly using \textit{Lagrangian Mechanics}. Then, using the \textit{input-output feedback-linearization} process, a \textit{feedback-linearized} model is presented. Based on these feedback-linearized models, a \textit{globally exponentially stable} tracking control law is derived, and the exponential stability is established theoretically. For the payload-UAV system, the {thrust-vectoring} control approach \cite{silva2018quadrotor} is used to obtain actual system inputs from the feedback-linearized control inputs.  Numerical simulation results are presented to analyse the tracking performance of both the controllers. The control algorithm is also evaluated to test its robustness against external disturbance (i.e., wind influence on the payload) and payload mass uncertainty. Later, a high-fidelity software-in-loop simulation is conducted in \texttt{Gazebo} simulator to verify the real-time performance of the proposed control law and validate its computational efficiency.

The rest of the paper is organized as follows: In Section \ref{sec:system_model}, the \textit{extended UAV dynamics} \cite{mistler2001exact} is briefly discussed. Then the equations of motion for the payload-UAV system are derived using \textit{Lagrangian Mechanics} and a state-space representation is obtained. In Section \ref{sec:input-output_fbl}, the feedback linearization procedure is presented for the payload-UAV system, and the tracking laws are derived. In Section \ref{sec:results}, the proposed control laws are evaluated and the results are presented for both the numerical as well as \texttt{Gazebo} simulations. The robustness of the control algorithm is demonstrated by testing it in two cases that often arise in practice: external disturbance on the payload, and payload mass uncertainty. Section \ref{sec:future} concludes the work and discusses the possibilities for future directions.

\section{Preliminaries and System Model}
\label{sec:system_model}

In this section, the preliminary on input-output feedback linearization, the extended UAV dynamics and the exact feedback linearization of individual UAV is discussed. Next the equations of motion for the multi-UAV payload system is derived using Lagrangian mechanics. Finally, a state-space representation of the multi-UAV payload system is provided.

\subsection{Input-Output Feedback Linearization}
The goal of input-output feedback linearization is to algebraically transform the output $y = h(x), y \in \mathbb{R}^p$ of a nonlinear system characterized by the dynamics $\dot{x} = f(x) + g(x)u, x\in\mathbb{R}^n, u\in\mathbb{R}^m$ into the output $y^{(r)} = v, v \in \mathbb{R}^p$ characterized by the linear system $\dot{z} = Az + Bv$, where $z$ is the new coordinate system, $r\in \mathbb{R}^p$ is the \textit{vector relative degree} and $v$ is the \textit{new transformed input} to the system.

In particular, if we can re-write the output as $y^{(r)} = b(x) + \Delta(x)u$ with $\Delta(x)$ invertible, then a state-feedback control law of the form $u = -\Delta(x)^{-1}b(x) + \Delta(x)^{-1}v$ can be found that renders the closed loop system linear and decoupled, with the new output being $y_i^{(r_i)} = v_i, i=1,..,p$. The notation $y_i^{(r_i)}$ denotes the $r_i^{\textit{th}}$ derivative of $y_i$. The quantities $\Delta(x), b(x)$ can be computed as:
\begin{align}
    \Delta(x) & = \left[
        \begin{array}{ccc}
             \mathfrak{L}_{g_1}\mathfrak{L}_f^{r_1 - 1}h_1(x) & \hdots & \mathfrak{L}_{g_m}\mathfrak{L}_f^{r_1 - 1}h_1(x) \\
             \vdots & \ddots & \vdots \\
             \mathfrak{L}_{g_m}\mathfrak{L}_f^{r_p - 1}h_p(x) & \hdots & \mathfrak{L}_{g_m}\mathfrak{L}_f^{r_p - 1}h_p(x)
        \end{array}
    \right] \label{eq:delta_x} \\ & \text{and} \ \ \ \ \  b(x) = \left[ 
        \begin{array}{c}
             \mathfrak{L}_f^{r_1}h_1(x) \\
             \vdots \\
             \mathfrak{L}_f^{r_p}h_p(x)
        \end{array}
    \right] \label{eq:b_x}
\end{align}
where $\mathfrak{L}_ab(x) = \frac{\partial b}{\partial x} \cdot a(x)$ denotes the \textit{lie derivative} of the vector field $b(x)$ along the vector field $a(x)$.

\begin{figure}
    \centering
    
    \begin{tikzpicture}[scale=1.2, every node/.style={transform shape}]
        
        \draw (0,0) rectangle (2.2, 1);
        \node at (1.1, 0.5) {\tiny$-\Delta^{-1}(\bar{X})b(\bar{X}) + \Delta^{-1}(\bar{X})v$ };
        
        \draw[-latex] (-0.3, 0.2) -- (0.0, 0.2);
        \draw[-latex] (-0.3, 0.4) -- (0.0, 0.4);
        \draw[-latex] (-0.3, 0.6) -- (0.0, 0.6);
        \draw[-latex] (-0.3, 0.8) -- (0.0, 0.8);
        
        \node at (-0.45, 0.2) {\tiny$v_4$};
        \node at (-0.45, 0.4) {\tiny$v_3$};
        \node at (-0.45, 0.6) {\tiny$v_2$};
        \node at (-0.45, 0.8) {\tiny$v_1$};
        
        \draw[-latex] (2.2, 0.8) -- (2.6, 0.8);
        \draw (2.6, 0.7) rectangle (2.8, 0.9);
        \node at (2.7, 0.8) {\tiny$\pmb{\int}$};
        \draw[-latex] (2.8, 0.8) -- (3.2, 0.8);
        \draw (3.2, 0.7) rectangle (3.4, 0.9);
        \node at (3.3, 0.8) {\tiny$\pmb{\int}$};
        \draw[-latex] (3.4, 0.8) -- (4.0, 0.8);
        
        \draw[-latex] (2.2, 0.6) -- (4.0, 0.6);
        \draw[-latex] (2.2, 0.4) -- (4.0, 0.4);
        \draw[-latex] (2.2, 0.2) -- (4.0, 0.2);
        
        \node at (2.35, 0.7) {\tiny$\bar{u}_1$};
        \node at (2.35, 0.5) {\tiny$\bar{u}_2$};
        \node at (2.35, 0.3) {\tiny$\bar{u}_3$};
        \node at (2.35, 0.1) {\tiny$\bar{u}_4$};
        
        \node at (3.8, 0.7) {\tiny$u_1$};
        \node at (3.8, 0.5) {\tiny$u_2$};
        \node at (3.8, 0.3) {\tiny$u_3$};
        \node at (3.8, 0.1) {\tiny$u_4$};
        
        \node at (2.9, 0.7) {\tiny$\xi$};
        \node at (3.5, 0.7) {\tiny$\zeta$};
        
        \draw (4.0, 0.0) rectangle (6.0, 1.0);
        \node at (5.0, 0.5) {\scriptsize UAV Dynamics};
        
        \draw[-latex] (5.0, 0.0) -- (5.0, -0.5) -- (1.1, -0.5) -- (1.1, 0.0);
        
        \node at (2.9, -0.35) {\scriptsize UAV State};
        \draw[-latex] (2.9, 0.8) -- (2.9, 1.3) -- (1.1, 1.3) -- (1.1, 1.0);
        \draw[-latex] (3.6, 0.8) -- (3.6, 1.7) -- (0.7, 1.7) -- (0.7, 1.0);
        
    \end{tikzpicture}
    
    \caption{Schematic diagram of the feedback linearization for UAV system}
    \label{fig:sch_diag_fl_UAV}
\end{figure} 

\subsection{An Extension to UAV dynamics}
The standard UAV dynamics can be written as:
\begin{align}
    \dot{\chi} = f(\chi) + \sum_{i=1}^4 g_i(\chi)U_i \label{eq:uav}
\end{align}
with
\begin{align}
    \chi &= \left[ r_x \ r_y \ r_z \ \psi \ \theta \ \phi \ \dot{r}_x \ \dot{r}_y \ \dot{r}_z \ p \ q \ r \right]^T \\
    f(\chi) &= \left[\begin{array}{c} \dot{r}_x \\ \dot{r}_y \\ \dot{r}_z \\ qsin(\phi)sec(\theta) + rcos(\phi)sec(\theta) \\ qcos(\phi) - rsin(\phi) \\ p + qsin(\phi)tan(\theta) + rcos(\phi)tan(\theta) \\ 0 \\ 0 \\ g \\ \frac{I_y - I_z}{Ix}qr \\ \frac{I_z - I_x}{I_y}pr \\ \frac{I_x - I_y}{I_z}pq \end{array}\right] \\
    g_1(\chi) &= \left[ 0 \ 0 \ 0 \ 0 \ 0 \ 0 \ o_1 \ o_2 \ o_3 \ 0 \ 0 \ 0 \right]^T \\
    g_2(\chi) &= \left[0 \ 0 \ 0 \ 0 \ 0 \ 0 \ 0 \ 0 \ 0 \ I_x^{-1} \ 0 \ 0 \right]^T \\
    g_3(\chi) &= \left[0 \ 0 \ 0 \ 0 \ 0 \ 0 \ 0 \ 0 \ 0 \ 0 \ I_y^{-1} \ 0 \right]^T \\
    g_4(\chi) &= \left[0 \ 0 \ 0 \ 0 \ 0 \ 0 \ 0 \ 0 \ 0 \ 0 \ 0 \ I_z^{-1} \right] \\
    \text{and}, & \\
    o_1 &= \frac{-1}{m}\left(cos(\phi)cos(\psi)sin(\theta) + sin(\phi)sin(\psi) \right) \\
    o_2 &= \frac{-1}{m}\left(cos(\phi)sin(\theta)sin(\psi) - cos(\psi)sin(\phi) \right) \\
    o_3 &= \frac{-1}{m}\left(cos(\theta)cos(\phi) \right)
\end{align}
where $(r_x, r_y, r_z)$ denote the $x-, y-, z-$ position of the UAV, $(\psi, \theta, \phi)$ denote the Euler angles namely yaw, pitch and roll angles of the UAV, $(\dot{r}_x, \dot{r}_y, \dot{r}_z)$ are the $x-, y-, z-$ linear velocities, $(p, q, r)$ are the angular velocities about the $x-, y-, z-$ axis and $U = \left[U_1 \ U_2 \ U_3 \ U_4 \right] = \left[f_t \ \tau_x \ \tau_y \ \tau_z \right]$ are the thrust and torque control inputs.

In order to carry out the input-output feedback linearization of the UAV system as described by Eq. \ref{eq:uav}, one has to obtain the matrix $\Delta(\chi)$ from Eq. \ref{eq:delta_x}. Unfortunately, for the standard UAV dynamics, the matrix $\Delta$ is non-invertible for any state $\chi$. As a workaround, the \textit{extended UAV dynamics} as presented in \cite{mistler2001exact} is considered, for which the matrix $\Delta$ is invertible in a certain state-space region.

In the extended UAV dynamics, the thrust control input is delayed by a double integrator, and the remaining three torque controls remain the same.
\begin{align}
    \text{Let} \ U &= \left[f_t \ \tau_x \ \tau_y \ \tau_z \right] = \left[U_1 \ U_2 \ U_3 \ U_4 \right] \in \mathbb{R}^4 \\
    \text{Define} \ U_1 &= \zeta, \ \dot{\zeta} = \xi, \ \text{and,} \ \dot{\xi} = \bar{U}_1 \ \\
    \text{Thus}, \ \bar{U} & = \left[ \bar{U}_1 \ \bar{U_2} \ \bar{U_3} \ \bar{U_4} \right] = \left[ \bar{U}_1 \ U_2 \ U_3 \ U_4 \right] \in \mathbb{R}^4 \label{eq:ext_u}
\end{align}

The quantities $\zeta, \xi$ become the internal states of the UAV, and thus the \textit{extended} state vector of the UAV $\bar{X} \in \mathbb{R}^{14}$ is redefined as:
\begin{align}
    \bar{X} = \left[ r_x \ r_y \ r_z \ \psi \ \theta \ \phi \ \dot{r}_x \ \dot{r}_y \ \dot{r}_z \ \zeta \ \xi \ p \ q \ r \right]^T \label{eq:ext_uav_state}
\end{align}
 Then, the \textit{extended} UAV dynamics is as follows:
\begin{align}
    \dot{\bar{X}} = \bar{f}(\bar{X}) + \sum_{i=1}^4 \bar{g}_i(\bar{X})\bar{U}_i
    \label{eq:ext_uav_model}
\end{align}
where the functions $\bar{f}$ and $\bar{g}_i$ are given by:

\begin{align*}
    &\bar{f}(\bar{X}) = \left[\begin{array}{c} \dot{r}_x \\ \dot{r}_y \\ \dot{r}_z \\ q \ sin(\phi)sec(\theta) + r \ cos(\phi)sec(\theta) \\ q \ cos(\phi) - r \ sin(\phi) \\ p+qsin(\phi)tan(\theta) + rcos(\phi)tan(\theta) \\ \frac{-\zeta}{m}\left( cos(\phi)cos(\psi)sin(\theta) + sin(\phi)sin(\psi) \right) \\ \frac{-\zeta}{m}\left( cos(\phi)sin(\theta)sin(\psi) - cos(\psi)sin(\phi) \right) \\ \frac{-\zeta}{m}\left( cos(\theta)cos(\phi) \right) \\ \xi \\ 0 \\ \frac{I_y - I_z}{Ix}qr \\ \frac{I_z - I_x}{I_y}pr \\ \frac{I_x - I_y}{I_z}pq \end{array} \right] \\
    &\text{and,} \\ 
    &\bar{g}_1(\bar{X}) = \left[0 \ 0 \ 0 \ 0 \ 0 \ 0 \ 0 \ 0 \ 0 \ 0 \ 1 \ 0 \ 0 \ 0 \right]^T \\
    &\bar{g}_2(\bar{X}) = \left[0 \ 0 \ 0 \ 0 \ 0 \ 0 \ 0 \ 0 \ 0 \ 0 \ 0 \ I_x^{-1} \ 0 \ 0 \right]^T \\
    &\bar{g}_3(\bar{X}) = \left[0 \ 0 \ 0 \ 0 \ 0 \ 0 \ 0 \ 0 \ 0 \ 0 \ 0 \ 0 \ I_y^{-1} \ 0 \right]^T \\
    &\bar{g}_4(\bar{X}) = \left[0 \ 0 \ 0 \ 0 \ 0 \ 0 \ 0 \ 0 \ 0 \ 0 \ 0 \ 0 \ 0 \ I_z^{-1} \right]^T
\end{align*}

\subsection{Exact feedback linearization of Extended UAV dynamics}
For the UAV system, the output is chosen as $y = h(x) = \left[r_x \ r_y \ r_z \ \psi \right]^T$. As discussed previously, the term $\Delta(\chi)$ is non-invertible for any state $\chi$ for the standard UAV dynamics. However, the extended UAV dynamics renders $\Delta(\bar{X})$ invertible $ \forall \ \ \zeta \neq 0, -\frac{\pi}{2} < \phi, \theta < \frac{\pi}{2}$\footnote{The matrix $\Delta(\bar{X})$ and $b(\bar{X})$ can be found in the supplementary material \href{https://drive.google.com/drive/folders/1ASU1YWjOVZp5kzjpX9h4WE2W88JTJczX?usp=sharing}{(here)} or in \cite{nish2022supp_mat} for reference.}. The vector relative degree is $r = \left[4 \ 4 \ 4 \ 2 \right]$. In other words, the $x-, y-, z-$ position of the UAV must be differentiated 4 times and the yaw angle must be differentiated 2 times, so that the control input $\bar{u}$ appears explicitly in the form of $b(\bar{X}) + \Delta(\bar{X})\bar{U}$, with $b(\bar{X}) \in \mathbb{R}^4$ and $\Delta(\bar{X}) \in \mathbb{R}^{4\times4}$.
Additionally, since the extended system in Eq. (\ref{eq:ext_uav_state}) has 14 states which also equals the total sum of the relative degrees of the output, the transformed system $y^{(r)} = v$ can be written in a \textit{fully linear and controllable form}\cite{isidori1985nonlinear}\cite{nijmeijer1990nonlinear}. Thus, the transformed output of the UAV is given by:
\begin{empheq}[box=\widefbox]{align}
    y^{(r)} = \left[\begin{array}{c}
         \ddddot{r}_x \\
         \ddddot{r}_y \\
         \ddddot{r}_z \\
         \ddot{\psi} 
    \end{array}\right] = 
    \left[\begin{array}{c}
         v_1 \\ v_2 \\ v_3 \\ v_4
    \end{array}\right] = v \label{eq:fl_uav_dyn}
\end{empheq}
The feedback linearization technique has been applied to only UAV dynamics in \cite{rastgoftar2018cooperative} and \cite{aghdam2016cooperative}, and the payload dynamics is modelled as an external force to the UAV. In this paper, the feedback linearization is developed for the entire payload-UAV system in Section \ref{sec:input-output_fbl}.

\subsection{Payload-UAV System Description}

In general, consider $N$ UAVs that are attached to the vertical massless rigid links of lengths $l_i$ via 2-DOF spherical joints. The standard NED (North, East, Down) right-handed coordinate system is used for payload-UAV system modelling, with the positive $z$-axis pointing downwards.
The location of the center of mass of the payload with a mass $m_0 \in \mathbb{R}$ in the inertial frame is denoted by $r_0 \in \mathbb{R}^3$, and the center of mass of the $i^{th}$ UAV with a mass $m_i \in \mathbb{R}$ in the inertial frame is denoted by $r_i \in \mathbb{R}^3$. The attachment point of the $i^{th}$ rigid link is denoted by $\rho_i \in \mathbb{R}^3$ in the \textit{payload-fixed frame}, where the origin coincides with the payload's center of mass. The unit vector expressed in the payload-fixed frame whose direction is along each of the rigid link at any given time is denoted by $\mathbb{k} = \left[ 0 \ 0 \ 1 \right]^T \in \mathbb{R}^3$. The attitude of the payload and the $i^{th}$ UAV is characterized by $\pmb{R}_0 \in SO(3)$ and $\pmb{R}_i \in SO(3)$, which is the rotation matrix that rotates a vector in the body-fixed frame to the inertial frame. The inertia matrix of the payload and the UAV is denoted by $\pmb{J}_0 \in \mathbb{R}^{3\times3}$ and $\pmb{J}_i \in \mathbb{R}^{3\times3}$ respectively.

The force vector generated by the $i^{th}$ UAV in the inertial frame is denoted by $f_i \in \mathbb{R}^3$, which is related to the overall thrust force $f_{t_i} \in \mathbb{R}$ produced by the motors of the $i^{th}$ UAV as $f_i = f_{t_i}\pmb{R}_i\mathbb{k}$. The motors also produce a torque vector $\tau_i = \left[ \tau_{x_i} \ \tau_{y_i} \ \tau_{z_i} \right]^T \in \mathbb{R}^3$ which is expressed in the body-fixed frame of the UAV. The overall control input to the payload-UAV system can then be chosen as $\left\{ f_{t_i}, \tau_i\right\}$ with $i\in 1,\hdots,N$.

\subsection{Payload-UAV Dynamics and State-space Representation}
With the notations and description of the payload-UAV system previously discussed, the equations of motion for the payload-UAV system are derived. As the rigid links are always vertical, the position of the $i^{th}$ UAV can be directly calculated as:
\begin{align}
    r_i = r_0 + \pmb{R}_0\left(\rho_i - l_i\mathbb{k} \right)
\end{align}
The kinematic equations are given by:
\begin{align}
    \dot{r}_0 & = \pmb{R}_0v_0 \label{eq:kin_beg}\\
    \dot{\pmb{R}}_0 & = \pmb{R}_0\omega_0^\times \\
    \dot{r}_i & = \dot{r}_0 + \pmb{R}_0\omega_0^\times \left(\rho_i - l_i\mathbb{k} \right) \\
    \dot{\pmb{R}}_i & = \pmb{R}_i\omega_i^\times \label{eq:kin_end}
\end{align}
where the skew-symmetric operator $(.)^\times : \mathbb{R}^3 \rightarrow SO(3)$ denotes the hat map, $v_0 \in \mathbb{R}^3$ denotes the linear velocity of the payload in the payload-fixed frame, $\omega_0\in\mathbb{R}^3$ and $\omega_i\in\mathbb{R}^3$ denote the angular velocities of the payload and the $i^{th}$ UAV respectively.

\begin{figure}
    \begin{tikzpicture}
    
        \node[name path=A, trapezium, draw, minimum width=3.5cm, trapezium left angle=120, trapezium right angle=60, line width=0mm, fill=brown!15] at (2.5,-3) {};
        \node(Y)[name path=B, trapezium, draw, minimum width=4cm, trapezium left angle=120, trapezium right angle=60, line width=0mm] at (2.5,-3) {};
        
        \draw (3.3, -2.7) -- (3.3, -2.2);
        \draw (1.05, -2.7) -- (3.3, -2.7) -- (3.95, -3.8);
        
        \draw[name path=C] (0.5, -2.08) -- (1.57, -3.93) -- (4.5, -3.93);
        \draw[name path=D] (0.5, -2.8) -- (1.57, -4.5) -- (4.5, -4.5);
        
        \draw (0.5, -2.08) -- (0.5, -2.8);
        \draw (1.57, -4.5) -- (1.57, -3.93);
        \draw (4.5, -4.5) -- (4.5, -3.93);
        
        \tikzfillbetween[of=A and B, on layer=ft]{brown, opacity=0.5};
        \tikzfillbetween[of=C and D, on layer=ft]{brown, opacity=0.5};
        
        \draw[name path=E] (0.55, -2.9) -- (0.55, 1.0); 
        \draw[name path=F] (0.6, -3.0) -- (0.6, 0.95);
        \tikzfillbetween[of=E and F, on layer=ft]{black, opacity=0.7};
        
        \draw[name path=G] (1.5, -4.4) -- (1.5, -1.0);
        \draw[name path=H] (1.45, -4.3) -- (1.45, -0.95);
        \tikzfillbetween[of=G and H, on layer=ft]{black, opacity=0.7};
        
        \draw[name path=I] (4.5, -3.95) -- (4.5, -1.0);
        \draw[name path=J] (4.45, -3.85) -- (4.45, -0.95);
        \tikzfillbetween[of=I and J, on layer=ft]{black, opacity=0.7};
    
        \draw[name path=K] (3.55, -2.3) -- (3.55,0.95);
        \draw[name path=L] (3.5, -2.2) -- (3.5, 1.0);
        \tikzfillbetween[of=K and L, on layer=ft]{black, opacity=0.7};
        
        \node[cylinder, draw=black!40, shape border rotate=90, minimum width=1.7cm, minimum height=0.8cm, cylinder uses custom fill, cylinder body fill = black!40, cylinder end fill = black!60, opacity=0.4] (c) at (2.5,-3.3) {};
        
        \node[ellipse, draw, fill = gray!60, minimum width = 0.55cm, minimum height = 0.08cm] (prop11) at (0.1,1.15 - 0.45) {};
        \node[ellipse, draw, fill = gray!60, minimum width = 0.55cm, minimum height = 0.08cm] (prop12) at (1.0,1.15 - 0.45) {};
        \node[ellipse, draw, fill = gray!60, minimum width = 0.55cm, minimum height = 0.08cm] (prop13) at (1.0,1.7 - 0.45) {};
        \node[ellipse, draw, fill = gray!60, minimum width = 0.55cm, minimum height = 0.08cm] (prop14) at (0.1,1.7 - 0.45) {};
        \draw[line width=0mm] (prop11) -- (prop13);
        \draw[line width=0mm] (prop12) -- (prop14);
        
        \node[ellipse, draw, fill = gray!60, minimum width = 0.55cm, minimum height = 0.08cm] (prop21) at (1.05,-0.8 - 0.45) {};
        \node[ellipse, draw, fill = gray!60, minimum width = 0.55cm, minimum height = 0.08cm] (prop22) at (1.95,-0.8 - 0.45) {};
        \node[ellipse, draw, fill = gray!60, minimum width = 0.55cm, minimum height = 0.08cm] (prop23) at (1.8,-0.25 - 0.45) {};
        \node[ellipse, draw, fill = gray!60, minimum width = 0.55cm, minimum height = 0.08cm] (prop24) at (1.0,-0.25 - 0.45) {};
        \draw[line width=0mm] (prop21) -- (prop23);
        \draw[line width=0mm] (prop22) -- (prop24);
        
        \node[ellipse, draw, fill = gray!60, minimum width = 0.55cm, minimum height = 0.08cm] (prop31) at (1.05 + 3,-0.8 - 0.45) {};
        \node[ellipse, draw, fill = gray!60, minimum width = 0.55cm, minimum height = 0.08cm] (prop32) at (1.95 + 3,-0.8 - 0.45) {};
        \node[ellipse, draw, fill = gray!60, minimum width = 0.55cm, minimum height = 0.08cm] (prop33) at (1.8 + 3,-0.25 - 0.45) {};
        \node[ellipse, draw, fill = gray!60, minimum width = 0.55cm, minimum height = 0.08cm] (prop34) at (1.0 + 3,-0.25 - 0.45) {};
        \draw[line width=0mm] (prop31) -- (prop33);
        \draw[line width=0mm] (prop32) -- (prop34);
        
        \node[ellipse, draw, fill = gray!60, minimum width = 0.55cm, minimum height = 0.08cm] (prop11) at (0.1 + 3,1.15 - 0.45) {};
        \node[ellipse, draw, fill = gray!60, minimum width = 0.55cm, minimum height = 0.08cm] (prop12) at (1.0 + 3,1.15 - 0.45) {};
        \node[ellipse, draw, fill = gray!60, minimum width = 0.55cm, minimum height = 0.08cm] (prop13) at (1.0 + 3,1.7 - 0.45) {};
        \node[ellipse, draw, fill = gray!60, minimum width = 0.55cm, minimum height = 0.08cm] (prop14) at (0.1 + 3,1.7 - 0.45) {};
        \draw[line width=0mm] (prop11) -- (prop13);
        \draw[line width=0mm] (prop12) -- (prop14);
        
        

		\node at (2.5, -5.0) {\large $r_0, \ \pmb{R}_0$};
		
        \node at (3.5, 1.8) {\large $r_i, \ \pmb{R}_i$};
        
		\draw[-latex] (2.65, -3.3) -- (3.32, -2.7);
		\node at (2.65, -3.3)[circle,fill,inner sep=1.5pt]{};        
		\node at (2.6, -3.0) {\large $\rho_i$};
        
        \draw[<-] (-0.7, -5.2) -- (-1.2, -5.0);
        \draw[->] (-1.2, -5.0) -- (-1.2, -5.6);
        \draw[->] (-1.2, -5.0) -- (-0.8, -4.6);
        
        \node at (-0.6, -4.6) {\tiny x};
        \node at (-0.6, -5.2) {\tiny y};
        \node at (-1.3, -5.7) {\tiny z};
        
        \draw[-latex] (3.55, 0.95) -- (4.0, -0.1);
        \node at (4.2, 0.1) {$f_i$};
        
        \draw[{|}{latex}-{latex}{|}] (0.45, 0.95) -- (0.45, -2.1);
        \node at (0.3, -0.7) {$l_i$};
        
    \end{tikzpicture}
    
    \caption{A schematic diagram illustrating the payload transport using multiple UAVs. The support-frame together with the payload (shown lightly as a black cylinder) is modelled as a cuboid.}
    \label{fig:schematic_diagram_payloadUAV}
    
\end{figure}
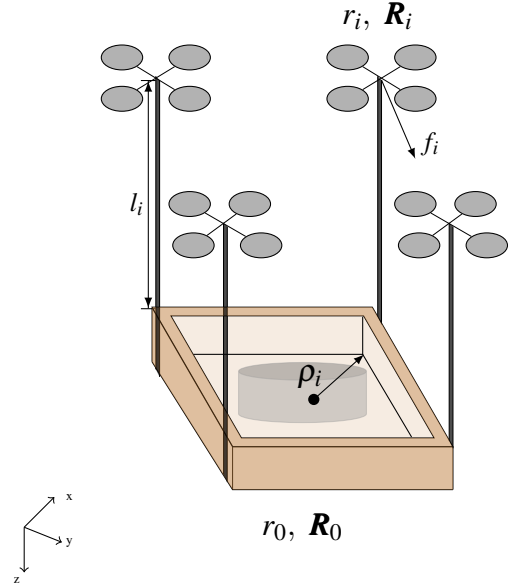

\textit{Lagrangian formulation} is used to describe the dynamics of the payload-UAV system. The kinetic energy $\mathcal{T}$ and the potential energy $\mathcal{U}$ of the system can be obtained as:
\begin{align}
    \begin{split}
        \mathcal{T} = {}& \frac{1}{2} m_0 \Vert \Dot{r}_0 \Vert^2_{_2} + \frac{1}{2} \omega_0 \cdot \pmb{J}_0\omega_0 \\ 
        & + \sum_{i=1}^N \left( \frac{1}{2} m_i \Vert \Dot{r}_i \Vert^2_{_2} + \frac{1}{2} \omega_i \cdot \pmb{J}_i \omega_i \right)
    \end{split} \\
    \begin{split}
        \mathcal{U} = {}& -m_0 g \mathbb{k} \cdot r_0 - \sum_{i=1}^N m_ig\mathbb{k} \cdot r_i
    \end{split} \\
    \begin{split}
        \mathcal{L} = {}& \mathcal{T} - \mathcal{U}
    \end{split}
\end{align}
where $\mathcal{L}$ is the Lagrangian of the payload-UAV dynamics. As described in \cite{lee2017geometric}, the \textit{Lagrange-d'Alembert Principle} is used to obtain the following equations:
\begin{align}
    \frac{d}{dt} \pmb{D}_{\dot{r_0}}\mathcal{L} - \pmb{D}_{r_0}\mathcal{L} & = \sum_{i=1}^N f_i \\
    \frac{d}{dt} \pmb{D}_{\omega_0} + \omega_0^\times\pmb{D}_{\omega_0} - \pmb{d}_{\pmb{R}_0}\mathcal{L} & = \sum_{i=1}^N \rho_i^\times \pmb{R}_0^Tf_i \\
    \frac{d}{dt} \pmb{D}_{\omega_i} \mathcal{L} + \omega_i^\times \pmb{D}_{\omega_i}\mathcal{L} & = \tau_i
\end{align}
where $\pmb{D}_a \mathcal{L}$ is the derivative of the Lagrangian $\mathcal{L}$ with respect to the vector $a$, and the quantity $\pmb{d}_{\pmb{R}_0}$ are known as the \textit{left-trivialized derivatives}\cite{lee2017geometric}. Simplifying the above set of equations yields the dynamics of the payload-UAV system:
\begin{gather}
    \begin{split}
        m_T\left(\dot{v}_0 + \omega_0^\times v_0\right) + \sum_{i=1}^N m_i \left( -\rho_i^\times\dot{\omega}_0 + \left(\omega_0^\times \right)^2\rho_i \right) \\ = \pmb{R}_0^T\left( m_Tg\mathbb{k} + \sum_{i=1}^N f_i \right) \ \ \ \ \ \ \label{eq:v_0}
    \end{split} \\
    \begin{split}
        \sum_{i=1}^Nm_i\rho_i^\times\left( \dot{v}_0 + \omega_0^\times v_0 \right) + \pmb{\Bar{J}}_0\dot{\omega}_0 + \omega_0^\times\pmb{\Bar{J}}_0\omega_0 \\ = \sum_{i=1}^N\rho_i^\times\pmb{R}_0^T\left( f_i + m_ig\mathbb{k} \right) \ \ \ \ \ \label{eq:w_0}
    \end{split} \\
    \begin{split}
        \pmb{J}_i\dot{\omega}_i + \omega_i^\times \pmb{J}\omega_i = \tau_i \label{eq:w_i}
    \end{split}
\end{gather}
where the quantity $m_T = m_0 + \sum_{i=1}^Nm_i$ is the total combined mass, and $\pmb{\bar{J}}_0 = \left( \pmb{J}_0 - \sum_{i=1}^N m_i\left( \rho_i^\times \right)^2 \right)$ is the \textit{apparent} moment of inertia of the payload.

In order to obtain a state-space representation of the payload-UAV dynamic model, define the state vector as:
\begin{align}
    X = \left[\underbrace{r_0^T \ v_0^T \ \Theta_0^T \ \omega_0^T}_\text{payload $\in \mathbb{R}^{12}$} \underbrace{\Theta_i^T \ \omega_i^T}_\text{$i^{th}$ UAV $\in \mathbb{R}^{6}$} \right]^T, \ i \in \{1,..., N \}
    \label{eq:state_def}
\end{align}
where $\Theta_0, \Theta_i\in\mathbb{R}^3$ are the $ZYX$ Euler angle characterization of the attitudes $\pmb{R}_0, \pmb{R}_i$ respectively. For the set of states $\dot{X}_a = \left[ r_0 \ \Theta_0 \ \Theta_i \ \omega_i \right]^T$, the state-space representation can be easily obtained from the set of Eq. (\ref{eq:kin_beg}) - (\ref{eq:kin_end}) and Eq. (\ref{eq:w_i}). For the remaining states $\dot{X}_b = \left[ v_0 \ \omega_0 \right]^T$, the state-space representation must be obtained by substituting and rearranging Eq. (\ref{eq:v_0}) and Eq. (\ref{eq:w_0}), which can be compactly represented by:
\begin{align}
    \dot{X}_b = \pmb{P}\pmb{Q} \label{eq:dot_x_b}
\end{align}
where $\pmb{P} \in \mathbb{R}^{6\times 6}$ is given by:
\begin{equation}
    \left[ \begin{array}{c|c}
         m_T\pmb{I}_3 & -\sum_{i=1}^N m_i\rho_i^\times \\
         \hline \sum_{i=1}^N m_i\rho_i^\times & \pmb{\bar{J}}_0
    \end{array}\right]^{-1} = \left[ 
    \begin{array}{c|c}
         \pmb{P}_{11} & \pmb{P}_{12} \\
         \hline
         \pmb{P}_{21} & \pmb{P}_{22}
    \end{array} \right] \label{eq:matrix_P}
\end{equation}
and $\pmb{Q} \in \mathbb{R}^{6\times 1}$ is given by:
\begin{equation}
\resizebox{.94\hsize}{!}{$
    \left[ \begin{array}{c}
         -m_{T} \omega_{0}^{\times} v_{0}-\sum_{i=1}^{N}m_{i}\left(\omega_{0}^{\times}\right)^{2} \rho_{i}+m_{T} g \pmb{R}_{0}^{T} \mathbb{k}+\sum_{i=1}^{N} \pmb{R}_{0}^{T} f_{i} \\ 
         \hline 
         -\omega_{0}^{\times} \Bar{\pmb{J}}_{0} \omega_{0}-\sum_{i=1}^{N} m_{i}\rho_{i}^{\times} \omega_{0}^{\times} v_{0}+\sum_{i=1}^{N} \rho_{i}^{\times} \pmb{R}_{0}^{T}\left(f_{i}+m_{i} g \mathbb{k}\right)
    \end{array}\right]$}
\end{equation}
where $\pmb{I}_3$ is the $3\times 3$ Identity matrix. The control input to the system is $\mathbb{U} \in \mathbb{R}^{4N}$:
\begin{align}
    \mathbb{U} = \left[f_{t_i} \ \tau_{x_i} \ \tau_{y_i} \ \tau_{z_i} \right]^T, \ \ i=1,\hdots,N \label{eq:payload_uav_u}
\end{align}
The structure of the resultant $\pmb{P}$ matrix can be found in the supplementary material \cite{nish2022supp_mat}.
\section{Input-Output Feedback Linearization and Exponentially stable tracking control law}
\label{sec:input-output_fbl}

In this section, the input-output feedback linearization of the payload-UAV system dynamics is presented. Next, an \textit{exponentially stable} tracking control law is designed for the UAV system and the payload-UAV system. Finally, the \textit{thrust-vectoring} control approach is used for the payload-UAV system to transform the feedback-linearized inputs to the actual system inputs.

\subsection{Feedback linearization of payload-UAV system}
\label{sec:input-output_fbl_sub_a}
The output for the payload-UAV system is chosen as the position of the payload i.e., $y_0 = h(X) = r_0$, for the purpose of payload-trajectory tracking. From Eq. (\ref{eq:dot_x_b}), it can be seen that the control input appears in the equation for $\dot{v}_0$, which can be written as:
\begin{align}
    \begin{split}
    \dot{v}_0  &= \pmb{P}_{11}\left( -m_{T} \omega_{0}^{\times} v_{0}-\sum_{i=1}^{N}m_{i}\left(\omega_{0}^{\times}\right)^{2} \rho_{i}+m_{T} g \pmb{R}_{0}^{T} \mathbb{k} \right) \\ &+ \underbrace{\pmb{P}_{12}\left(  -\omega_{0}^{\times} \Bar{\pmb{J}}_{0} \omega_{0}-\sum_{i=1}^{N} m_{i}\rho_{i}^{\times} \omega_{0}^{\times} v_{0}+\sum_{i=1}^{N} \rho_{i}^{\times} \pmb{R}_{0}^{T}m_{i} g \mathbb{k} \right)}_\text{a function of state $\overline{f(X)}$} \\ &+ \underbrace{\pmb{P}_{11}\sum_{i=1}^{N} \pmb{R}_{0}^{T} f_{i} + \pmb{P}_{12}\sum_{i=1}^{N} \rho_{i}^{\times} \pmb{R}_{0}^{T}f_{i}}_\text{a function of state and control input}
    \end{split} \label{eq:expand_v_0}
\end{align}
Thus, the relative degree of the output $y_0$ is $2$, as $r_0$ must be differentiated twice, so that the control inputs appear explicitly as in Eq. (\ref{eq:expand_v_0}).
\begin{align}
    \text{Let} \ \ \pmb{R}_0^Tf_i &= \mathbb{u}_i \implies \sum_{i=1}^N\pmb{R}_0^Tf_i = \sum_{i=1}^N\mathbb{u}_i \\
    &= \underbrace{\left[ \pmb{I}_3 \ \hdots \ \pmb{I}_3 \right]}_\text{$\pmb{S}_1$} \left[ \begin{array}{l}
         \mathbb{u}_1  \\
         \vdots \\
         \mathbb{u}_N
    \end{array} \right] = \pmb{S}_1\bar{\mathbb{u}} \\ 
    \text{Also, } \ &\sum_{i=1}^N \rho_i^\times \pmb{R}_0^Tf_i = \sum_{i=1}^N\rho_i^\times \mathbb{u}_i \\
    \implies & \underbrace{\left[\rho_1^\times \ \hdots \ \rho_N^\times \right]}_\text{$\pmb{S}_2$}\left[\begin{array}{l}
         \mathbb{u}_1 \\
         \vdots \\
         \mathbb{u}_N
    \end{array} \right] = \pmb{S}_2\bar{\mathbb{u}}
\end{align}

Thus, Eq. (\ref{eq:expand_v_0}) can be written as:
\begin{align}
    \dot{v}_0 &= \overline{f(X)} + \left(\pmb{P}_{11}\pmb{S}_1 + \pmb{P}_{12}\pmb{S}_2\right)\bar{\mathbb{u}} \\
    \text{Thus, } \ \ddot{r}_0 &= \dot{\pmb{R}}_0v_0 + \pmb{R}_0\dot{v}_0 \\
    \implies \ddot{r}_0 &= \underbrace{\pmb{R}_0\omega_0^\times v_0 + \pmb{R}_0\overline{f(X)}}_\text{$b(X)$} + \underbrace{\pmb{R}_0\left(\pmb{P}_{11}\pmb{S}_1 + \pmb{P}_{12}\pmb{S}_2 \right)}_\text{$\Delta(X)$}\bar{\mathbb{u}} \\
    & \text{Let } \ \bar{\mathbb{u}} = \Delta^{\dagger}(X)\mathbb{v} - b(X) \\
    & \ \ \ \ \ \boxed{\implies \ddot{y}_0 = \left[\begin{array}{l} \ddot{r}_{0_x} \\ \ddot{r}_{0_y} \\ \ddot{r}_{0_z} \end{array}\right] = \left[\begin{array}{l}\mathbb{v}_1 \\ \mathbb{v}_2 \\ \mathbb{v}_3 \end{array}\right] = \mathbb{v}} \label{eq:fl_r0}
\end{align}

\begin{figure}
    \centering
    
    \begin{tikzpicture}[scale=1.2, every node/.style={transform shape}]
        \draw (0.0, 0.0) rectangle (1.8, 0.8);
        \node at (0.9, 0.4) {\scriptsize$\Delta^\dagger(X)\mathbb{v} - b(X)$};
        
        \draw[-latex] (-0.3, 0.2) -- (0.0, 0.2);
        \draw[-latex] (-0.3, 0.4) -- (0.0, 0.4);
        \draw[-latex] (-0.3, 0.6) -- (0.0, 0.6);
        
        \node at (-0.45, 0.6) {\scriptsize $\mathbb{v}_1$};
        \node at (-0.45, 0.4) {\scriptsize $\mathbb{v}_2$};
        \node at (-0.45, 0.2) {\scriptsize $\mathbb{v}_3$};
        
        \draw[-latex] (1.8, 0.4) -- (2.5, 0.4);
        \node at (2.1, 0.5) {\scriptsize $\bar{\mathbb{u}}$};
        
        \draw (2.5, 0.0) rectangle (4.0, 0.8);
        \node at (3.25, 0.6) {\scriptsize Thrust};
        \node at (3.25, 0.375) {\scriptsize Vectoring};
        \node at (3.25, 0.2) {\scriptsize Control};
        
        \draw[-latex] (4.0, 0.4) -- (5.0, 0.4);
        \node at (4.5, 0.55) {\scriptsize $\mathbb{U}$};
        
        \draw (5.0, 0.0) rectangle (6.5, 0.8);
        \node at (5.75, 0.5) {\scriptsize Payload-UAV};
        \node at (5.75, 0.25) {\scriptsize System};
        
        \draw[-latex] (5.75, 0.0) -- (5.75, -0.5) -- (0.9, -0.5) -- (0.9, 0.0);
        \node at (3.25, -0.35) {\scriptsize State $X$};
        
    \end{tikzpicture}
    
    \caption{Schematic diagram of the feedback linearization for the payload-UAV system.}
    \label{fig:sch_diag_fl_pl_UAV}
\end{figure}

where, $\Delta^{\dagger}(X) \in \mathbb{R}^{3N \times 3}$ denotes the \textit{Moore-Penrose pseudoinverse} of the matrix $\Delta(X) \in \mathbb{R}^{3 \times 3N}$, $b(X) \in \mathbb{R}^3$, $\mathbb{u}_i \in \mathbb{R}^3$, $\bar{\mathbb{u}} \in \mathbb{R}^{3N}$, $\pmb{S}_1, \pmb{S}_2 \in \mathbb{R}^{3\times 3N}$, $\mathbb{v} \in \mathbb{R}^3$ and $\overline{f(X)} \in \mathbb{R}^3$.
\medskip
\begin{theorem}
The Moore-Penrose pseudoinverse $\Delta(X)^\dagger$ of the matrix $\Delta(X)$ always exists, such that $\Delta(X)\Delta^\dagger(X) = \pmb{I}_3$ 
\end{theorem}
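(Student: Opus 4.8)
The plan is to split the statement into its two halves. Existence of $\Delta^\dagger(X)$ is immediate, since the Moore--Penrose pseudoinverse is defined for every matrix; the substantive content is the right-inverse identity $\Delta(X)\Delta^\dagger(X) = \pmb{I}_3$. For a wide matrix $\Delta(X) \in \mathbb{R}^{3\times 3N}$ this identity is equivalent to $\Delta(X)$ having full row rank $3$: in that case $\Delta(X)\Delta^T(X)$ is invertible and the pseudoinverse coincides with the explicit right inverse $\Delta^\dagger(X) = \Delta^T(X)\big(\Delta(X)\Delta^T(X)\big)^{-1}$, whence $\Delta\Delta^\dagger = \pmb{I}_3$. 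First I would record this standard characterization, thereby reducing the theorem to the single claim that $\Delta(X)$ has rank $3$.

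Next I would exploit the factorization $\Delta(X) = \pmb{R}_0\,[\pmb{P}_{11}\ \pmb{P}_{12}]\,\big[\pmb{S}_1^T\ \pmb{S}_2^T\big]^T$. Since $\pmb{R}_0 \in SO(3)$ is orthogonal, it can be removed without altering the rank. The matrix $\pmb{P}$ is the inverse of the $6\times 6$ generalized mass--inertia matrix $M$ of Eq. (\ref{eq:matrix_P}); I would show $M$ is symmetric positive definite by completing the square, $\left[a^T\ b^T\right]M\left[a^T\ b^T\right]^T = m_0\|a\|^2 + \sum_{i=1}^N m_i\|a - \rho_i^\times b\|^2 + b^T\pmb{J}_0 b$, which is positive for $(a,b)\neq 0$ whenever $m_0>0$ and $\pmb{J}_0\succ 0$. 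Hence $\pmb{P} = M^{-1}$ is symmetric positive definite and, in particular, its top block row $[\pmb{P}_{11}\ \pmb{P}_{12}] \in \mathbb{R}^{3\times 6}$ has full row rank $3$.

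It then remains to verify that right-multiplication by the selection matrices does not drop the rank. I would form the Gram matrix $G = \big[\pmb{S}_1^T\ \pmb{S}_2^T\big]^T\big[\pmb{S}_1^T\ \pmb{S}_2^T\big] \succeq 0$ and observe that $\Delta\Delta^T = \pmb{R}_0\,[\pmb{P}_{11}\ \pmb{P}_{12}]\,G\,[\pmb{P}_{11}\ \pmb{P}_{12}]^T\pmb{R}_0^T$ is invertible exactly when the form $G$ is strictly positive on the row space of $[\pmb{P}_{11}\ \pmb{P}_{12}]$. A short computation using $(\rho_i^\times)^T = -\rho_i^\times$ gives $G$ with blocks $N\pmb{I}_3$, $\mp\sum_i\rho_i^\times$ and $-\sum_i(\rho_i^\times)^2$, and its kernel is the set of pairs $(y_a,y_b)$ with $y_a = \rho_i^\times y_b$ for every $i$; this collapses to $\{0\}$ as soon as the attachment points $\rho_i$ are not collinear, in which case $G\succ 0$ and invertibility of $\Delta\Delta^T$ follows for any admissible $[\pmb{P}_{11}\ \pmb{P}_{12}]$.

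I expect this last step to be the main obstacle. Full row rank is \emph{not} automatic from the nonsingularity of $\pmb{R}_0$ and $\pmb{P}$ alone, because $\Delta(X)$ is built from the wide matrices $\pmb{S}_1,\pmb{S}_2$ and the rank can in principle fall; the genuine work is ruling out a nonzero covector $w$ with $w^T(\pmb{P}_{11} + \pmb{P}_{12}\rho_i^\times) = 0$ for all $i$ simultaneously, which is precisely the condition that $G$ stays positive on $\text{row}[\pmb{P}_{11}\ \pmb{P}_{12}]$. The delicate point is that for a single UAV or for collinear links this condition can fail, so I would read the word ``always'' together with the standing geometric assumption that the links are distributed to give the team full control authority over the payload's translational acceleration, and state the non-collinearity of the $\rho_i$ explicitly as the hypothesis under which the argument closes.
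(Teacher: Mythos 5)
Your reduction of the theorem to full row rank of $\Delta(X)$, your positive-definiteness argument for the generalized mass matrix via the completed square (the identity $\left[a^T\ b^T\right]M\left[a^T\ b^T\right]^T = m_0\|a\|^2 + \sum_{i=1}^N m_i\|a-\rho_i^\times b\|^2 + b^T\pmb{J}_0 b$ checks out), and your Gram-matrix kernel computation are all correct, and your route is genuinely different from the paper's: the paper instead writes $\Delta(X)$ as a horizontal stack of blocks $\pmb{P}_{11}+\pmb{P}_{12}\rho_i^\times$, asserts that each block has the form $d\pmb{I}_3 + a^\times$, and invokes $\det(d\pmb{I}_3+a^\times) = d(d^2+\|a\|^2)\neq 0$. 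However, there is a genuine gap in your proposal, sitting exactly where you predicted: you stop at the sufficient condition $G\succ 0$, cannot close the collinear and $N=1$ cases, and therefore weaken the theorem by adding a non-collinearity hypothesis. That hypothesis is unnecessary, and your closing assertion that the conclusion ``can fail'' for a single UAV or collinear links is wrong --- what fails there is only the global positivity $G\succ 0$, never the positivity of $G$ on the row space of $[\pmb{P}_{11}\ \pmb{P}_{12}]$. The missing idea is that the candidate kernel pair is not arbitrary: it has the constrained form $(y_a,y_b)=\pmb{P}\,(v,0)$ with $\pmb{P}=M^{-1}$, and this constraint kills it unconditionally.

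Concretely, suppose $v^T\Delta(X)=0$ with $v\neq 0$. Since $M$, hence $\pmb{P}$, is symmetric, set $y_a=\pmb{P}_{11}v$ and $y_b=\pmb{P}_{21}v$; then $v^T(\pmb{P}_{11}+\pmb{P}_{12}\rho_i^\times)=0$ for all $i$ is exactly your kernel condition $y_a=\rho_i^\times y_b$ for every $i$. But $(y_a,y_b)$ also satisfies $M\left[y_a^T\ y_b^T\right]^T=\left[v^T\ 0^T\right]^T$, whose second block row reads $\sum_{i=1}^N m_i\rho_i^\times y_a + \bar{\pmb{J}}_0 y_b = 0$. Substituting $y_a=\rho_i^\times y_b$ term by term gives $\sum_i m_i(\rho_i^\times)^2 y_b + \bigl(\pmb{J}_0-\sum_i m_i(\rho_i^\times)^2\bigr)y_b = \pmb{J}_0 y_b = 0$, forcing $y_b=0$, then $y_a=0$, then $v=0$ because $\pmb{P}$ is invertible --- a contradiction. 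So $\Delta(X)$ has full row rank for arbitrary attachment points (collinear, coincident, even $N=1$), assuming only positive masses and $\pmb{J}_0\succ 0$; the theorem's ``always'' is literally correct and needs no geometric hypothesis. This completion is worth having, because the paper's own proof leans on claims that are false in general ($\pmb{P}_{11}$ a scalar matrix, and the product of two skew-symmetric matrices being skew-symmetric --- for nonzero $3\times 3$ skew matrices it never is); your mass-matrix framework plus the two-line substitution above yields a proof that is both rigorous and strictly stronger than the printed one.
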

\begin{proof}
From definition, $\Delta(X) = \pmb{R}_0\left( \pmb{P}_{11}\pmb{S}_1 + \pmb{P}_{12}\pmb{S}_2 \right)$. The matrix $\pmb{R}_0$ is a rotation matrix that is invertible for any given $X$, thus $rank(\pmb{R}_0) = 3$. The matrix $\pmb{P}_{11}$ is a diagonal matrix with same elements, and $\pmb{S}_{1}$ is a horizontal stack of $N$ identity matrices $\pmb{I}_3$. Thus, $rank(\pmb{P}_{11}\pmb{S}_1) = 3$. The matrix $\pmb{P}_{12}$ is a skew-symmetric matrix, as the vectors $\rho_i$ are all unique\footnote{The full matrix \pmb{P} for the system parameters given in Table \ref{tab:paramVals} can be found \href{https://drive.google.com/drive/folders/1ASU1YWjOVZp5kzjpX9h4WE2W88JTJczX?usp=sharing}{here} or in \cite{nish2022supp_mat} for reference.}, and the matrix $\pmb{S}_2$ is a horizontal stack of all the unique skew-symmetric matrices $\rho_i^\times$. Thus, the product $\pmb{P}_{12}\pmb{S}_2$ is a horizontal stack of skew-symmetric matrices, as the product of two skew-symmetric matrices is a skew-symmetric matrix. Now, consider the diagonal matrix $\pmb{D} \in \mathbb{R}^{3\times 3}$ with same elements and a skew-symmetric matrix that results from the hat-map on a vector $a = \left[a_1 \ a_2 \ a_3 \right]$:
\begin{align*}
\begin{aligned}
    \pmb{D} & = \left[ \begin{array}{ccc} d & 0 & 0 \\ 0 & d & 0 \\ 0 & 0 & d \end{array} \right] \ \text{and} \ \pmb{A} = a^\times = \left[ \begin{array}{ccc} 0 & -a_3 & a_2 \\ a_3 & 0 & -a_1 \\ -a_2 & a_1 & 0  \end{array} \right] \\
    & \implies det(\pmb{D} + \pmb{A}) = d\left(d^2 + a_1^2 + a_2^2 + a_3^2 \right) \neq 0 \\ & \ \ \ \ \ \ \text{for $d\neq 0$ and any a}. \text{ Thus, } rank(\pmb{D} + \pmb{A}) = 3.
\end{aligned}
\end{align*}
Moreover,
\begin{align*}
    & \text{Let } \pmb{K} = \left[ \pmb{D} \ \hdots \ \pmb{D} \right] + \left[ a_1^\times \ \hdots \ a_N^\times \right] = \left[ \pmb{D} + a_1^\times \ \hdots \ \pmb{D} + a_N^\times \right] \\
    & \implies rank(\pmb{K}) = 3.
\end{align*}
By taking individual diagonal matrices stacked horizontally in $\pmb{P}_{11}\pmb{S}_1$ as $\pmb{D}$ and individual skew symmetric matrices stacked horizontally in $\pmb{P}_{12}\pmb{S}_2$ as $\pmb{A}$,  the above steps can be used to prove that $rank(\pmb{P}_{11}\pmb{S}_1 + \pmb{P}_{12}\pmb{S}_2) = 3$. Thus,
\begin{align*}
    & rank(\pmb{R}_0(\pmb{P}_{11}\pmb{S}_1 + \pmb{P}_{12}\pmb{S}_2)) = 3, \text{ since} \\
    & rank(\pmb{R}_0\pmb{K}) = rank(\pmb{K}) \text{ when $\pmb{R}_0$ is full rank}.
\end{align*}
Thus, the matrix $\Delta(X)$ consists of $3$ \textit{linearly independent rows}, which implies that $\Delta(X)$ has a \textit{right inverse} $\Delta(X)^\dagger$ such that $\Delta(X)\Delta(X)^\dagger = \pmb{I}_3$. Thus there exists a bijection between the feedback linearized acceleration dynamics of Eq. (\ref{eq:fl_r0}) and the original acceleration dynamics of Eq. (\ref{eq:expand_v_0}).
\end{proof}

The requirement that the attachment points $\rho_i$ be all unique is a physical requirement, so that there is some clearance between the UAVs themselves. 

\begin{figure*}
    \begin{tikzpicture}
    
        \node[name path=A, trapezium, draw, minimum width=3.5cm, trapezium left angle=120, trapezium right angle=60, line width=0mm, fill=brown!15] at (2.5,-3) {};
        \node(Y)[name path=B, trapezium, draw, minimum width=4cm, trapezium left angle=120, trapezium right angle=60, line width=0mm] at (2.5,-3) {};
        
        \draw (3.3, -2.7) -- (3.3, -2.2);
        \draw (1.05, -2.7) -- (3.3, -2.7) -- (3.95, -3.8);
        
        \draw[name path=C] (0.5, -2.08) -- (1.57, -3.93) -- (4.5, -3.93);
        \draw[name path=D] (0.5, -2.8) -- (1.57, -4.5) -- (4.5, -4.5);
        
        \draw (0.5, -2.08) -- (0.5, -2.8);
        \draw (1.57, -4.5) -- (1.57, -3.93);
        \draw (4.5, -4.5) -- (4.5, -3.93);
        
        \tikzfillbetween[of=A and B, on layer=ft]{brown, opacity=0.5};
        \tikzfillbetween[of=C and D, on layer=ft]{brown, opacity=0.5};
        
        \draw[name path=E] (0.55, -2.9) -- (0.55, 1.0); 
        \draw[name path=F] (0.6, -3.0) -- (0.6, 0.95);
        \tikzfillbetween[of=E and F, on layer=ft]{black, opacity=0.7};
        
        \draw[name path=G] (1.5, -4.4) -- (1.5, -1.0);
        \draw[name path=H] (1.45, -4.3) -- (1.45, -0.95);
        \tikzfillbetween[of=G and H, on layer=ft]{black, opacity=0.7};
        
        \draw[name path=I] (4.5, -3.95) -- (4.5, -1.0);
        \draw[name path=J] (4.45, -3.85) -- (4.45, -0.95);
        \tikzfillbetween[of=I and J, on layer=ft]{black, opacity=0.7};
    
        \draw[name path=K] (3.55, -2.3) -- (3.55,0.95);
        \draw[name path=L] (3.5, -2.2) -- (3.5, 1.0);
        \tikzfillbetween[of=K and L, on layer=ft]{black, opacity=0.7};
        
        \node[cylinder, draw=black!40, shape border rotate=90, minimum width=1.7cm, minimum height=0.8cm, cylinder uses custom fill, cylinder body fill = black!40, cylinder end fill = black!60] (c) at (2.5,-3.3) {\tiny Load };
        
        \node[ellipse, draw, fill = gray!60, minimum width = 0.55cm, minimum height = 0.08cm] (prop11) at (0.1,1.15) {};
        \node[ellipse, draw, fill = gray!60, minimum width = 0.55cm, minimum height = 0.08cm] (prop12) at (1.0,1.15) {};
        \node[ellipse, draw, fill = gray!60, minimum width = 0.55cm, minimum height = 0.08cm] (prop13) at (1.0,1.7) {};
        \node[ellipse, draw, fill = gray!60, minimum width = 0.55cm, minimum height = 0.08cm] (prop14) at (0.1,1.7) {};
        \draw[line width=0mm] (prop11) -- (prop13);
        \draw[line width=0mm] (prop12) -- (prop14);
        
        \node[ellipse, draw, fill = gray!60, minimum width = 0.55cm, minimum height = 0.08cm] (prop21) at (1.05,-0.8) {};
        \node[ellipse, draw, fill = gray!60, minimum width = 0.55cm, minimum height = 0.08cm] (prop22) at (1.95,-0.8) {};
        \node[ellipse, draw, fill = gray!60, minimum width = 0.55cm, minimum height = 0.08cm] (prop23) at (1.8,-0.25) {};
        \node[ellipse, draw, fill = gray!60, minimum width = 0.55cm, minimum height = 0.08cm] (prop24) at (1.0,-0.25) {};
        \draw[line width=0mm] (prop21) -- (prop23);
        \draw[line width=0mm] (prop22) -- (prop24);
        
        \node[ellipse, draw, fill = gray!60, minimum width = 0.55cm, minimum height = 0.08cm] (prop31) at (1.05 + 3,-0.8) {};
        \node[ellipse, draw, fill = gray!60, minimum width = 0.55cm, minimum height = 0.08cm] (prop32) at (1.95 + 3,-0.8) {};
        \node[ellipse, draw, fill = gray!60, minimum width = 0.55cm, minimum height = 0.08cm] (prop33) at (1.8 + 3,-0.25) {};
        \node[ellipse, draw, fill = gray!60, minimum width = 0.55cm, minimum height = 0.08cm] (prop34) at (1.0 + 3,-0.25) {};
        \draw[line width=0mm] (prop31) -- (prop33);
        \draw[line width=0mm] (prop32) -- (prop34);
        
        \node[ellipse, draw, fill = gray!60, minimum width = 0.55cm, minimum height = 0.08cm] (prop11) at (0.1 + 3,1.15) {};
        \node[ellipse, draw, fill = gray!60, minimum width = 0.55cm, minimum height = 0.08cm] (prop12) at (1.0 + 3,1.15) {};
        \node[ellipse, draw, fill = gray!60, minimum width = 0.55cm, minimum height = 0.08cm] (prop13) at (1.0 + 3,1.7) {};
        \node[ellipse, draw, fill = gray!60, minimum width = 0.55cm, minimum height = 0.08cm] (prop14) at (0.1 + 3,1.7) {};
        \draw[line width=0mm] (prop11) -- (prop13);
        \draw[line width=0mm] (prop12) -- (prop14);
        
        \draw[<-, color=red!70, densely dashed] (0.55, 1.0) -- (0.55, 1.4);
        \draw[<-, color=red!70, densely dashed] (1.5, -1.0) -- (1.5, -0.6);
        \draw[<-, color=red!70, densely dashed] (4.5, -1.0) -- (4.5, -0.6);
        \draw[<-, color=red!70, densely dashed] (3.55, 0.95) -- (3.55, 1.35);
        
        \draw[blue, thick] (-1.6, 2.5) to[out=30, in=150] (0.4, 2.0);
        \draw[blue, thick] (-1.6 + 0.8, 2.5 - 2.) to[out=30, in=150] (0.4 + 0.8, 2.0 - 2.);
        \draw[blue, thick] (-1.6 + 3.8, 2.5 - 2.) to[out=30, in=150] (0.4 + 3.8, 2.0 - 2.);
        \draw[blue, thick] (-1.6 + 2.8, 2.5) to[out=30, in=150] (0.4 + 2.8, 2.0);
        
        \node at (3.0, -5.0) {(a)};
        
        \tikzset{shift={(8,2)}}
        
        \node[name path=A, trapezium, draw, minimum width=3.5cm, trapezium left angle=120, trapezium right angle=60, line width=0mm, fill=brown!15] at (2.5,-3) {};
        \node(Y)[name path=B, trapezium, draw, minimum width=4cm, trapezium left angle=120, trapezium right angle=60, line width=0mm] at (2.5,-3) {};
        
        \draw (3.3, -2.7) -- (3.3, -2.2);
        \draw (1.05, -2.7) -- (3.3, -2.7) -- (3.95, -3.8);
        
        \draw[name path=C] (0.5, -2.08) -- (1.57, -3.93) -- (4.5, -3.93);
        \draw[name path=D] (0.5, -2.8) -- (1.57, -4.5) -- (4.5, -4.5);
        
        \draw (0.5, -2.08) -- (0.5, -2.8);
        \draw (1.57, -4.5) -- (1.57, -3.93);
        \draw (4.5, -4.5) -- (4.5, -3.93);
        
        \tikzfillbetween[of=A and B, on layer=ft]{brown, opacity=0.5};
        \tikzfillbetween[of=C and D, on layer=ft]{brown, opacity=0.5};
        
        \draw[name path=E] (0.55, -2.9) -- (0.55, 1.0); 
        \draw[name path=F] (0.6, -3.0) -- (0.6, 0.95);
        \tikzfillbetween[of=E and F, on layer=ft]{black, opacity=0.7};
        
        \draw[name path=G] (1.5, -4.4) -- (1.5, -1.0);
        \draw[name path=H] (1.45, -4.3) -- (1.45, -0.95);
        \tikzfillbetween[of=G and H, on layer=ft]{black, opacity=0.7};
        
        \draw[name path=I] (4.5, -3.95) -- (4.5, -1.0);
        \draw[name path=J] (4.45, -3.85) -- (4.45, -0.95);
        \tikzfillbetween[of=I and J, on layer=ft]{black, opacity=0.7};
    
        \draw[name path=K] (3.55, -2.3) -- (3.55,0.95);
        \draw[name path=L] (3.5, -2.2) -- (3.5, 1.0);
        \tikzfillbetween[of=K and L, on layer=ft]{black, opacity=0.7};
        
        \node[cylinder, draw=black!40, shape border rotate=90, minimum width=1.7cm, minimum height=0.8cm, cylinder uses custom fill, cylinder body fill = black!40, cylinder end fill = black!60] (c) at (2.5,-3.3) {\tiny load};
        
        \node[ellipse, draw, fill = gray!60, minimum width = 0.55cm, minimum height = 0.08cm] (prop11) at (0.1,1.15 - 0.45) {};
        \node[ellipse, draw, fill = gray!60, minimum width = 0.55cm, minimum height = 0.08cm] (prop12) at (1.0,1.15 - 0.45) {};
        \node[ellipse, draw, fill = gray!60, minimum width = 0.55cm, minimum height = 0.08cm] (prop13) at (1.0,1.7 - 0.45) {};
        \node[ellipse, draw, fill = gray!60, minimum width = 0.55cm, minimum height = 0.08cm] (prop14) at (0.1,1.7 - 0.45) {};
        \draw[line width=0mm] (prop11) -- (prop13);
        \draw[line width=0mm] (prop12) -- (prop14);
        
        \node[ellipse, draw, fill = gray!60, minimum width = 0.55cm, minimum height = 0.08cm] (prop21) at (1.05,-0.8 - 0.45) {};
        \node[ellipse, draw, fill = gray!60, minimum width = 0.55cm, minimum height = 0.08cm] (prop22) at (1.95,-0.8 - 0.45) {};
        \node[ellipse, draw, fill = gray!60, minimum width = 0.55cm, minimum height = 0.08cm] (prop23) at (1.8,-0.25 - 0.45) {};
        \node[ellipse, draw, fill = gray!60, minimum width = 0.55cm, minimum height = 0.08cm] (prop24) at (1.0,-0.25 - 0.45) {};
        \draw[line width=0mm] (prop21) -- (prop23);
        \draw[line width=0mm] (prop22) -- (prop24);
        
        \node[ellipse, draw, fill = gray!60, minimum width = 0.55cm, minimum height = 0.08cm] (prop31) at (1.05 + 3,-0.8 - 0.45) {};
        \node[ellipse, draw, fill = gray!60, minimum width = 0.55cm, minimum height = 0.08cm] (prop32) at (1.95 + 3,-0.8 - 0.45) {};
        \node[ellipse, draw, fill = gray!60, minimum width = 0.55cm, minimum height = 0.08cm] (prop33) at (1.8 + 3,-0.25 - 0.45) {};
        \node[ellipse, draw, fill = gray!60, minimum width = 0.55cm, minimum height = 0.08cm] (prop34) at (1.0 + 3,-0.25 - 0.45) {};
        \draw[line width=0mm] (prop31) -- (prop33);
        \draw[line width=0mm] (prop32) -- (prop34);
        
        \node[ellipse, draw, fill = gray!60, minimum width = 0.55cm, minimum height = 0.08cm] (prop11) at (0.1 + 3,1.15 - 0.45) {};
        \node[ellipse, draw, fill = gray!60, minimum width = 0.55cm, minimum height = 0.08cm] (prop12) at (1.0 + 3,1.15 - 0.45) {};
        \node[ellipse, draw, fill = gray!60, minimum width = 0.55cm, minimum height = 0.08cm] (prop13) at (1.0 + 3,1.7 - 0.45) {};
        \node[ellipse, draw, fill = gray!60, minimum width = 0.55cm, minimum height = 0.08cm] (prop14) at (0.1 + 3,1.7 - 0.45) {};
        \draw[line width=0mm] (prop11) -- (prop13);
        \draw[line width=0mm] (prop12) -- (prop14);
        
        \draw[densely dashed, red] (3.25, -4.2) to[out=-20, in=-180] (6, -6.2);
        \node at (4.8, -5.0) {$r_{0_d}$};
        
        \node at (3.0, -7.0) {(b)};

    \end{tikzpicture}
    
    \caption{Figure demonstrates the different phases of payload transportation: a)UAVs come to a formation and hover above the links and vertically descend to attach. b)Trajectory tracking of the payload-UAV system.}
    \label{fig:formation_UAV_payload}
    
\end{figure*}
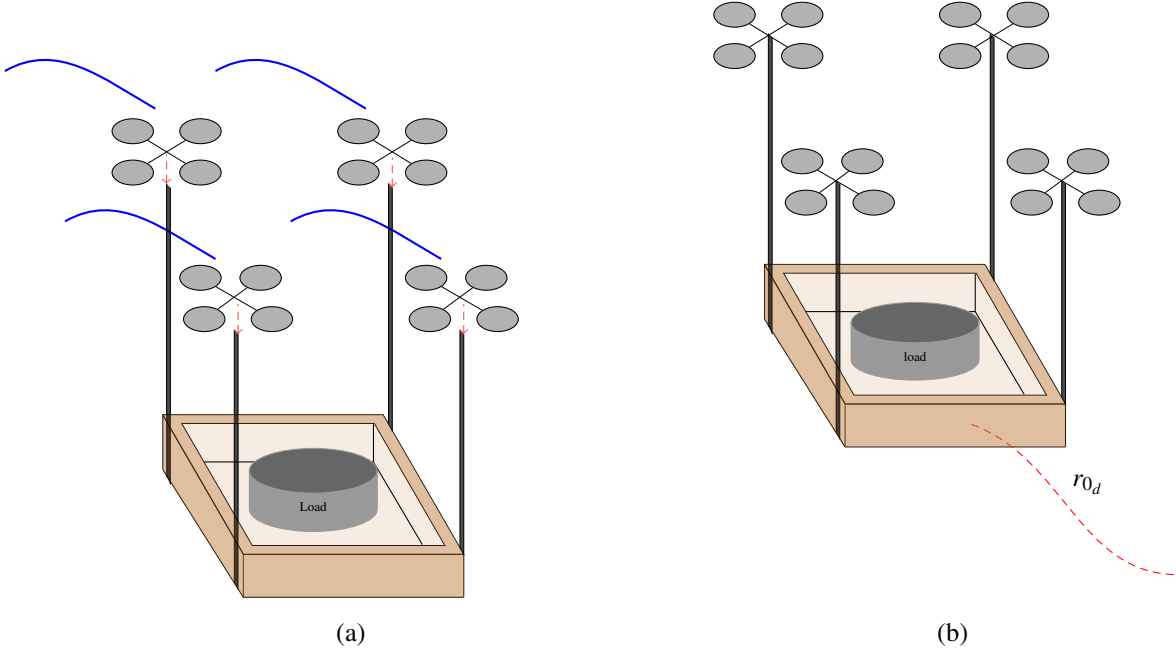

\subsection{Exponentially Stable Tracking Control Law}
Here, an \textit{exponentially stable tracking control law} is designed for the feedback linearized dynamics of both the UAV and the payload-UAV system characterized by Eq. (\ref{eq:fl_uav_dyn}) and Eq. (\ref{eq:fl_r0}) respectively. 

\begin{theorem}
\label{thm:uav_position_tracking}
Let $y_d = \left[r_{x_d} \ r_{y_d} \ r_{z_d} \ \psi_d \right]^T$ be the desired reference trajectory that the UAV must track. Then, globally exponentially stable tracking is achieved under the feedback control law:
\begin{subequations}
\begin{empheq}[box=\widefbox]{align}
    v_1 & = \ddddot{r}_{x_d} + \beta_1\dddot{e}_{x} + \beta_2\ddot{e}_x + \beta_3\dot{e}_x + \beta_4 e_x \\
    v_2 & = \ddddot{r}_{y_d} + \beta_1\dddot{e}_{y} + \beta_2\ddot{e}_y + \beta_3\dot{e}_y + \beta_4 e_y \\
    v_3 & = \ddddot{r}_{z_d} + \beta_1\dddot{e}_{z} + \beta_2\ddot{e}_z + \beta_3\dot{e}_z + \beta_4 e_z \\
    v_4 & = \ddot{\psi}_d + \beta_5\dot{e}_{\psi} + \beta_6 e_\psi
\end{empheq}
\label{eq:ctrl_law_uav}
\end{subequations}
\end{theorem}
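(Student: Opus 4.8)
The plan is to reduce the closed-loop tracking problem to a stability analysis of decoupled, homogeneous linear time-invariant (LTI) error dynamics. First I would introduce the tracking errors $e_x = r_{x_d} - r_x$, $e_y = r_{y_d} - r_y$, $e_z = r_{z_d} - r_z$, and $e_\psi = \psi_d - \psi$, consistent with the notation already appearing in the control law (\ref{eq:ctrl_law_uav}). Because the exact feedback linearization renders the transformed UAV dynamics precisely as in (\ref{eq:fl_uav_dyn}), namely $\ddddot{r}_x = v_1$, $\ddddot{r}_y = v_2$, $\ddddot{r}_z = v_3$, and $\ddot{\psi} = v_4$, the four output channels are fully decoupled and each may be treated separately.

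Next I would substitute the proposed control law into these channel equations. For the $x$-channel, writing $\ddddot{e}_x = \ddddot{r}_{x_d} - \ddddot{r}_x = \ddddot{r}_{x_d} - v_1$ and inserting $v_1$ cancels the feedforward term $\ddddot{r}_{x_d}$, leaving the homogeneous fourth-order equation
\begin{equation}
    \ddddot{e}_x + \beta_1 \dddot{e}_x + \beta_2 \ddot{e}_x + \beta_3 \dot{e}_x + \beta_4 e_x = 0,
\end{equation}
with identical equations governing $e_y$ and $e_z$. The same cancellation in the yaw channel produces the second-order equation $\ddot{e}_\psi + \beta_5 \dot{e}_\psi + \beta_6 e_\psi = 0$. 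Hence the closed-loop error obeys an LTI system whose characteristic polynomials are $p(s) = s^4 + \beta_1 s^3 + \beta_2 s^2 + \beta_3 s + \beta_4$ for each translational channel and $q(s) = s^2 + \beta_5 s + \beta_6$ for the yaw channel.

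The final step is to select the gains $\beta_i$ so that $p(s)$ and $q(s)$ are Hurwitz, i.e. all their roots lie strictly in the open left half-plane; this is guaranteed by the Routh--Hurwitz criterion, or more concretely by any pole placement such as $p(s) = (s + \lambda)^4$ and $q(s) = (s + \mu)^2$ with $\lambda, \mu > 0$. Since the error system is linear and time-invariant, Hurwitz stability of these polynomials is equivalent to global exponential stability of the equilibrium $e = 0$: every solution satisfies a bound of the form $\Vert e(t) \Vert \le c\, e^{-\alpha t} \Vert e(0) \Vert$ for positive constants $c, \alpha$ fixed by the spectral abscissa of the associated companion matrices. This yields the claimed exponentially stable tracking.

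I do not expect a substantive obstacle in the stability argument, since for LTI systems local and global exponential stability coincide and the decay rate is read off directly from the eigenvalue locations. The point that requires care is honesty about the domain on which the feedback linearization holds: the transformed dynamics (\ref{eq:fl_uav_dyn}) are valid only where $\Delta(\bar{X})$ is invertible, that is, for $\zeta \neq 0$ and $-\frac{\pi}{2} < \phi, \theta < \frac{\pi}{2}$. Consequently the ``global'' claim is global in the linearized error coordinates rather than over the entire physical state space, and I would flag this caveat while presenting the computation.
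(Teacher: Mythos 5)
Your proof is correct, but it takes a genuinely different route from the paper's. The paper \emph{derives} the control law by a Lyapunov construction: for each translational channel it defines the composite (sliding-surface-like) variable $q_x = \dddot{e}_x + \alpha_1\ddot{e}_x + \alpha_2\dot{e}_x + \alpha_3 e_x$, takes $V_x = \tfrac{1}{2}q_x^2$, imposes $\dot{V}_x = -k_x V_x$ with $k_x > 0$, and reads off $v_1$, identifying $\beta_1 = \alpha_1 + k_x/2$, $\beta_2 = \alpha_2 + k_x\alpha_1/2$, $\beta_3 = \alpha_3 + k_x\alpha_2/2$, $\beta_4 = k_x\alpha_3/2$ (and analogously for yaw with $q_\psi = \dot{e}_\psi + \alpha_4 e_\psi$). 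In effect the paper factors your characteristic polynomial as $p(s) = \left(s^3 + \alpha_1 s^2 + \alpha_2 s + \alpha_3\right)\left(s + k_x/2\right)$; its Lyapunov argument certifies only exponential decay of the composite variable $q_x$, and exponential convergence of $e_x$ itself additionally requires the cubic factor to be Hurwitz --- a condition the paper states only informally in the simulation section (``the $\alpha_i$'s must be chosen in a way such that the poles of the error differential equations must lie in the negative left half plane''). Your direct substitution plus Routh--Hurwitz argument reaches the same closed-loop equation $\ddddot{e}_x + \beta_1\dddot{e}_x + \beta_2\ddot{e}_x + \beta_3\dot{e}_x + \beta_4 e_x = 0$ but makes the gain condition on the $\beta_i$ explicit and bounds the full error state in one step, which is arguably the more complete stability proof; what it loses is the constructive flavor --- the paper's derivation explains where the specific parametrization of the $\beta_i$ comes from and supplies an explicit Lyapunov certificate with a tunable decay rate $k_x$. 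Your closing caveat also goes beyond the paper and is well taken: the transformed dynamics (\ref{eq:fl_uav_dyn}) hold only where $\Delta(\bar{X})$ is invertible (i.e., $\zeta \neq 0$ and $-\tfrac{\pi}{2} < \phi, \theta < \tfrac{\pi}{2}$), so the ``global'' claim is properly understood as global in the linearized error coordinates rather than over the entire physical state space, a qualification the paper's proof does not make.
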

\begin{proof}
(For x-position) Let $e_x = r_{d_x} - r_x$ be the tracking error. Define $q_x = \dddot{e}_x + \alpha_1\ddot{e}_x + \alpha_2\dot{e}_x + \alpha_3e_x$. A candidate Lyapunov function can be chosen as $V_x = \frac{1}{2}q_x^2$ which is \textit{globally positive definite} for $q_x \neq 0$. For exponential stability, the Lyapunov function must satisfy the differential equation
\begin{align*}
    &\dot{V}_x = -k_xV_x, \ k\in \mathbb{R}^+ \implies \dot{q}_xq_x = -\frac{k_x}{2}q_x^2 \\ 
    \implies &\ddddot{e}_x + \alpha_1\dddot{e}_x + \alpha_2\ddot{e}_x + \alpha_3\dot{e}_x = \frac{-k_x}{2}\left( \dddot{e}_x + \alpha_1\ddot{e}_x + \alpha_2\dot{e}_x + \alpha_3e_x \right) \\
    \implies & v_1 = \ddddot{r}_{x_d} + (\alpha_1+k_x/2)\dddot{e}_x + (\alpha_2+k_x\alpha_1/2)\ddot{e}_x \\ & \ \ \ \ \ \ \ \ \ \ \ + (\alpha_3 + k_x\alpha_2/2)\dot{e}_x  + (k_x\alpha_3/2)e_x \\
    \implies & v_1 = \ddddot{r}_{x_d} + \beta_1\dddot{e}_{x} + \beta_2\ddot{e}_x + \beta_3\dot{e}_x + \beta_4 e_x
\end{align*}
A similar proof for (y-, z-) position can be derived. For ($\psi$)-axis, let $e_\psi = \psi_d - \psi$ be the yaw tracking error. Define $q_\psi = \dot{e}_\psi + \alpha_4e_\psi$. A candidate Lyapunov function can be chosen as $V_\psi = \frac{1}{2}q_\psi^2$ which is \textit{globally positive definite} for $q_\psi \neq 0$. For exponential stability, the Lyapunov function must satisfy the differential equation
\begin{align*}
    &\dot{V}_\psi = -k_\psi V_\psi, \ k_\psi \in \mathbb{R}^+ \implies \dot{q}_\psi q_\psi = -\frac{k_\psi}{2}q_\psi^2 \\
    \implies &\ddot{e}_\psi + \alpha_4\dot{e}_\psi = -\frac{k_\psi}{2}\left( \dot{e}_\psi + \alpha_4 e_\psi \right) \\
    \implies &v_4 = \ddot{\psi}_d + (\alpha_4 + k_\psi/2)\dot{e}_\psi + (k_\psi\alpha_4 / 2)e_\psi \\
    \implies &v_4 = \ddot{\psi}_d + \beta_5\dot{e}_\psi + \beta_6 e_\psi
\end{align*}
\end{proof}

\begin{theorem}
Let $y_{0_d} = r_{0_d} = \left[ r_{0_{x_d}} \ r_{0_{y_d}} \ r_{0_{z_d}} \right]^T$ be the desired reference trajectory that the payload must track. Then, globally exponentially stable tracking is achieved under the feedback control law:
\begin{subequations}
\begin{empheq}[box=\widefbox]{align}
    \mathbb{v}_1 &= \ddot{r}_{0_{x_d}} + \beta_7\dot{e}_{r_x} + \beta_8 e_{r_x} \\
    \mathbb{v}_2 &= \ddot{r}_{0_{y_d}} + \beta_7\dot{e}_{r_y} + \beta_8 e_{r_y} \\
    \mathbb{v}_3 &= \ddot{r}_{0_{z_d}} + \beta_7\dot{e}_{r_z} + \beta_8 e_{r_z}
\end{empheq}
\label{eq:ctrl_law_pl_uav}
\end{subequations}
\begin{proof}
Define the errors $e_{r_\mathbb{I}} = r_{0_{\mathbb{I}}} - r_{0_\mathbb{I}}$ where $\mathbb{I}$ can be $x, y, z$ Define $q_{0_\mathbb{I}} = \ddot{e}_{r_\mathbb{I}} + \alpha_5\dot{e}_{r_\mathbb{I}} + \alpha_6 e_{r_\mathbb{I}}$. Select a candidate Lyapunov function to be $V_{r_\mathbb{I}} = \frac{1}{2}q^2_{r_\mathbb{I}}$. The rest of the proof is similar to the proof given in Theorem \ref{thm:uav_position_tracking} (yaw angle tracking).
\end{proof}
\end{theorem}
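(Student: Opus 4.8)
The plan is to build directly on the feedback-linearized payload dynamics of Eq.~(\ref{eq:fl_r0}), namely $\ddot{y}_0 = \ddot{r}_0 = \mathbb{v}$, which the preceding pseudoinverse-existence result (Theorem~1) guarantees holds \emph{globally} because $\Delta(X)\Delta^\dagger(X) = \pmb{I}_3$ at every state $X$. This relation splits into three decoupled scalar double integrators $\ddot{r}_{0_\mathbb{I}} = \mathbb{v}_\mathbb{I}$, $\mathbb{I} \in \{x, y, z\}$, so I would treat each coordinate independently and reduce the theorem to three identical scalar tracking problems, each of relative degree two --- precisely the setting of the yaw-angle argument in Theorem~\ref{thm:uav_position_tracking}, so the same Lyapunov construction should carry over.

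For a fixed axis I would define the tracking error $e_{r_\mathbb{I}} = r_{0_{\mathbb{I}_d}} - r_{0_\mathbb{I}}$ and introduce the first-order composite variable $q_{r_\mathbb{I}} = \dot{e}_{r_\mathbb{I}} + \alpha_5 e_{r_\mathbb{I}}$ with $\alpha_5 > 0$, chosen so that $q_{r_\mathbb{I}} \to 0$ forces $e_{r_\mathbb{I}} \to 0$ through stable first-order dynamics. Taking $V_{r_\mathbb{I}} = \tfrac{1}{2}q_{r_\mathbb{I}}^2$, which is globally positive definite and radially unbounded, I would impose the exponential-decay condition $\dot{V}_{r_\mathbb{I}} = -k_{r_\mathbb{I}} V_{r_\mathbb{I}}$ for some $k_{r_\mathbb{I}} > 0$. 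Since $\dot{V}_{r_\mathbb{I}} = q_{r_\mathbb{I}}\dot{q}_{r_\mathbb{I}}$, this collapses to $\dot{q}_{r_\mathbb{I}} = -\tfrac{k_{r_\mathbb{I}}}{2}q_{r_\mathbb{I}}$, i.e. $\ddot{e}_{r_\mathbb{I}} + \alpha_5\dot{e}_{r_\mathbb{I}} = -\tfrac{k_{r_\mathbb{I}}}{2}(\dot{e}_{r_\mathbb{I}} + \alpha_5 e_{r_\mathbb{I}})$.

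The closing step is to solve this identity for the transformed input using $\ddot{e}_{r_\mathbb{I}} = \ddot{r}_{0_{\mathbb{I}_d}} - \mathbb{v}_\mathbb{I}$. Substituting and collecting terms yields $\mathbb{v}_\mathbb{I} = \ddot{r}_{0_{\mathbb{I}_d}} + (\alpha_5 + k_{r_\mathbb{I}}/2)\dot{e}_{r_\mathbb{I}} + (k_{r_\mathbb{I}}\alpha_5/2)e_{r_\mathbb{I}}$, and relabelling $\beta_7 = \alpha_5 + k_{r_\mathbb{I}}/2$ and $\beta_8 = k_{r_\mathbb{I}}\alpha_5/2$ recovers the control law~(\ref{eq:ctrl_law_pl_uav}) exactly. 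The decay condition integrates to $V_{r_\mathbb{I}}(t) = V_{r_\mathbb{I}}(0)e^{-k_{r_\mathbb{I}} t}$ for arbitrary initial data, so $q_{r_\mathbb{I}}$ and hence $e_{r_\mathbb{I}}$ vanish exponentially and globally; applying this to all three axes completes the argument.

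The scalar Lyapunov algebra is routine; the genuinely substantive point I would emphasise is the reliance on Theorem~1, since the double-integrator form $\ddot{r}_0 = \mathbb{v}$ is only legitimate if the linearizing feedback $\bar{\mathbb{u}} = \Delta^\dagger(X)\mathbb{v} - b(X)$ reproduces $\mathbb{v}$ through $\Delta(X)\Delta^\dagger(X) = \pmb{I}_3$ at every state visited along the trajectory. I would also flag that $r_0$ has relative degree two while the full payload-UAV state is higher-dimensional, so this theorem certifies payload-\emph{position} tracking only; the residual attitude and internal states are not covered by this Lyapunov argument and are instead resolved through the thrust-vectoring allocation.
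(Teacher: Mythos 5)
Your proposal is correct and takes essentially the same approach as the paper: it reduces the feedback-linearized payload dynamics $\ddot{r}_0 = \mathbb{v}$ of Eq.~(\ref{eq:fl_r0}) to three decoupled double integrators and applies, per axis, exactly the yaw-angle Lyapunov construction of Theorem~\ref{thm:uav_position_tracking} ($V = \tfrac{1}{2}q^2$, $\dot{V} = -kV$, giving $\beta_7 = \alpha_5 + k/2$ and $\beta_8 = k\alpha_5/2$), which is precisely what the paper's proof sketch invokes. The only divergence is that your first-order composite variable $q_{r_\mathbb{I}} = \dot{e}_{r_\mathbb{I}} + \alpha_5 e_{r_\mathbb{I}}$ is the consistent choice for this relative-degree-two output, whereas the paper's sketch writes the second-order $q_{0_\mathbb{I}} = \ddot{e}_{r_\mathbb{I}} + \alpha_5\dot{e}_{r_\mathbb{I}} + \alpha_6 e_{r_\mathbb{I}}$, which would inject $\dot{\mathbb{v}}$ into $\dot{q}$ and cannot reproduce the two-parameter law~(\ref{eq:ctrl_law_pl_uav}) --- evidently a typo (as is the error definition $e_{r_\mathbb{I}} = r_{0_{\mathbb{I}}} - r_{0_\mathbb{I}}$) that your version silently and correctly repairs.
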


Once the control inputs $v$ for the UAV and $\mathbb{v}$ for the payload-UAV system is found using the above tracking laws, the original control inputs $\bar{U}$ for the UAV and $\bar{\mathbb{u}}$ for the payload-UAV system is found from the feedback linearized bijective map as discussed in Section \ref{sec:input-output_fbl_sub_a}.

\subsection{Thrust-Vectoring Control}
Once $\bar{\mathbb{u}}$ is found for the payload-UAV system, one must transform this to the original control input $\mathbb{U}$ of Eq. (\ref{eq:payload_uav_u}) for the payload-UAV system. However, recall that 
\begin{align}
    \bar{\mathbb{u}} = \left[ \mathbb{u}_1^T \ \hdots \ \mathbb{u}_N^T \right]^T, \ \text{and } \ \mathbb{u}_i = \pmb{R}_0^Tf_i = f_{t_i}\pmb{R}_0^T\pmb{R}_i\mathbb{k} \label{eq:relation_u_f}
\end{align}
From $\mathbb{u}_i$, the quantities $\left[ f_{t_i} \ \tau_{x_i} \ \tau_{y_i} \ \tau_{z_i} \right]$ must be calculated. The main challenge here is to find the desired attitude of the UAVs from Eq. (\ref{eq:relation_u_f}), based on the direction of the control command vector $\mathbb{u}_i$. For this purpose, the \textit{thrust vectoring control approach} is used as described in \cite{silva2018quadrotor}. The whole idea of thrust vectoring control is to find the desired orthonormal axes from the direction of $\mathbb{u}_i$ and thus the desired rotation matrix, from which the desired Euler angles can be computed easily. Then an attitude controller can be designed to find the torque inputs $\left[ \tau_{x_i} \ \tau_{y_i} \ \tau_{z_i} \right]$. From Eq. (\ref{eq:relation_u_f}), the thrust force $f_{t_i}$ can be computed easily as follows:
\begin{align}
    \mathbb{u_i} =& \pmb{R}_0^Tf_i = f_{t_i}\pmb{R}_0^T\pmb{R}_i\mathbb{k} \\
    \implies \Vert\mathbb{u}_i\Vert_2 =& \Vert f_{t_i}\pmb{R}_0^T\pmb{R}_i\mathbb{k} \Vert_2 \\
    =& f_{t_i}
\end{align}
where $\Vert . \Vert_2$ denotes the \textit{$l^2$ norm}. To find the desired Euler angles, find the desired directions of the orthonormal body-fixed axes $\mathbb{x}^B_{d_i}, \ \mathbb{y}^B_{d_i}, \ \mathbb{z}^B_{d_i}$ of the $i^{th}$ UAV as:
\begin{align*}
    \mathbb{z}^B_{d_i} = \frac{\mathbb{u}_i}{\Vert \mathbb{u}_i \Vert_2}& \\
    \text{Desirable yaw angle is $\psi_{d_i} = 0$}& \ \implies \mathbb{x}^\psi = \left[ 1 \ 0 \ 0 \right]^T \\ 
    \mathbb{y}^B_{d_i} = \frac{\mathbb{z}^B_{d_i} \times \mathbb{x}^\psi}{\Vert \mathbb{z}^B_{d_i} \times \mathbb{x}^\psi \Vert_2} \ \ \text{and} \ \ &\mathbb{x}^B_{d_i} = \frac{\mathbb{y}^B_{d_i} \times \mathbb{z}^B_{d_i}}{\Vert \mathbb{y}^B_{d_i} \times \mathbb{z}^B_{d_i} \Vert_2}
\end{align*}
where $\mathbb{x}^\psi$ is the intermediate orientation of the first reference frame rotation according to the ZYX Euler angle definition. The desired yaw angle $\psi_{d_i}$ of the $i^{th}$ UAV is 0, due to the 2-DOF spherical joint constraint. The desired rotation matrix $\pmb{R}_{d_i}$ of the $i^{th}$ UAV and thus the desired roll and pitch angles can be calculated as:
\begin{align}
    \pmb{R}_{d_i} &= \left[ \mathbb{x}^B_{d_i} \ \ \mathbb{y}^B_{d_i} \ \ \mathbb{z}^B_{d_i}  \right] \\
    \phi_{d_i} = \text{atan2}\left( \frac{\pmb{R}_{d_i, 32}}{\pmb{R}_{d_i, 33}} \right) \ &\text{and} \ \theta_{d_i} = \text{atan2}\left( \frac{-\pmb{R}_{d_i, 31}}{\sqrt{ \pmb{R}^2_{d_i, 32} + \pmb{R}^2_{d_i, 33} }} \right)
\end{align}
Since the attitude dynamics of the UAV is decoupled from the rest of the system, Eq. (\ref{eq:w_i}) can be feedback linearized into:
\begin{align}
    \pmb{J}_i\dot{\omega}_i &= -\omega_i^\times\pmb{J}_i\omega_i + \tau_i \\ 
    \text{Let} \ \ \tau_i &= \pmb{J}_i\overline{\tau_i} + \omega^\times_i\pmb{J}_i\omega_i \label{eq:fl_tau} \\
    \implies & \overline{\tau_i} = \dot{\omega}_i \\
    \text{Now,} \ \ \dot{\Theta}_i = \pmb{R}_{\Theta}\omega_i &\implies \ddot{\Theta}_i = \pmb{R}_\Theta\dot{\omega}_i + \dot{\pmb{R}}_\Theta\omega_i \\
    \implies \ddot{\Theta}_i = \pmb{R}_\Theta\overline{\tau_i} &+ \dot{\pmb{R}}_\Theta\omega_i = \mathbb{T}_i \label{eq:fl_T}
\end{align}
where $\pmb{R}_\Theta$ is the matrix that transforms angular velocities to Euler rates, and $\mathbb{T} = \left[ \mathbb{T}_\phi \ \mathbb{T}_\theta \ \mathbb{T}_\psi \right]^T$ is the new transformed attitude control input.
\begin{theorem}
Given the desired Euler angles $\phi_{d_i}, \theta_{d_i}, \psi_{d_i}$ there exists an exponentially stable attitude tracking controller under the control law:
\begin{subequations}
\begin{empheq}[box=\widefbox]{align}
    \mathbb{T}_{\phi_i} &= \ddot{\phi}_{d_i} + \beta_9\dot{e}_{\phi_i} + \beta_{10}e_{\phi_i} \\
    \mathbb{T}_{\theta_i} &= \ddot{\theta}_{d_i} + \beta_9\dot{e}_{\theta_i} + \beta_{10}e_{\theta_i} \\
    \mathbb{T}_{\psi_i} &= \ddot{\psi}_{d_i} + \beta_9\dot{e}_{\psi_i} + \beta_{10}e_{\psi_i}
\end{empheq}
\label{eq:ctrl_law_pl_uav_att}
\end{subequations}
\end{theorem}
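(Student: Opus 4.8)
The plan is to exploit the fact that the feedback linearization carried out in Eq.~(\ref{eq:fl_tau})--(\ref{eq:fl_T}) has already collapsed the $i^{th}$ UAV attitude dynamics into the trivial decoupled double integrator $\ddot{\Theta}_i = \mathbb{T}_i$, i.e. $\ddot{\phi}_i = \mathbb{T}_{\phi_i}$, $\ddot{\theta}_i = \mathbb{T}_{\theta_i}$ and $\ddot{\psi}_i = \mathbb{T}_{\psi_i}$. Because the three channels are identical and uncoupled, it suffices to establish exponential tracking for a single channel, say $\phi_i$, and the arguments for $\theta_i$ and $\psi_i$ then follow verbatim. This is structurally the same situation as the yaw-tracking portion of Theorem~\ref{thm:uav_position_tracking}: a relative-degree-two output driven directly by the transformed input, so I would deliberately reuse that scalar Lyapunov construction.

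First I would define the tracking error $e_{\phi_i} = \phi_{d_i} - \phi_i$ and the first-order surface $q_{\phi_i} = \dot{e}_{\phi_i} + \alpha_7 e_{\phi_i}$ with $\alpha_7 > 0$, and select the candidate Lyapunov function $V_{\phi_i} = \tfrac{1}{2}q_{\phi_i}^2$, which is globally positive definite for $q_{\phi_i} \neq 0$. Next I would impose the exponential-decay condition $\dot{V}_{\phi_i} = -k_{\phi_i} V_{\phi_i}$ with $k_{\phi_i} > 0$, which reduces to $\dot{q}_{\phi_i} = -\tfrac{k_{\phi_i}}{2} q_{\phi_i}$. Substituting $\ddot{e}_{\phi_i} = \ddot{\phi}_{d_i} - \mathbb{T}_{\phi_i}$ into $\dot{q}_{\phi_i} = \ddot{e}_{\phi_i} + \alpha_7 \dot{e}_{\phi_i}$ and solving for $\mathbb{T}_{\phi_i}$ recovers exactly the claimed law, with the gains identified as $\beta_9 = \alpha_7 + k_{\phi_i}/2$ and $\beta_{10} = k_{\phi_i}\alpha_7/2$, both automatically positive.

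To conclude exponential stability I would integrate $\dot{V}_{\phi_i} = -k_{\phi_i} V_{\phi_i}$ to obtain $V_{\phi_i}(t) = V_{\phi_i}(0) e^{-k_{\phi_i} t}$, so that $|q_{\phi_i}(t)| \to 0$ exponentially; since $q_{\phi_i}$ is a Hurwitz first-order filter of $e_{\phi_i}$ with pole at $-\alpha_7 < 0$, the error $e_{\phi_i}$ and its derivative inherit the exponential decay. Equivalently, substituting the control law back into the closed loop yields $\ddot{e}_{\phi_i} + \beta_9\dot{e}_{\phi_i} + \beta_{10}e_{\phi_i} = 0$, whose characteristic polynomial $s^2 + \beta_9 s + \beta_{10}$ is Hurwitz for $\beta_9, \beta_{10} > 0$.

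I do not anticipate a genuine obstacle: the feedback-linearized form has removed every nonlinearity and cross-coupling, so the argument is a direct transcription of the scalar proof already used for yaw. The only point meriting care is propagating the exponential decay of the surface $q_{\phi_i}$ to the physical error $e_{\phi_i}$ through the stable first-order filter, and verifying that positivity of $\alpha_7$ and $k_{\phi_i}$ keeps the synthesized gains positive; beyond that, the result is immediate.
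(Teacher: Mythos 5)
Your proposal is correct and takes essentially the same approach as the paper: the paper's proof defines the same per-channel errors and Lyapunov functions $V_{\phi_i} = \tfrac{1}{2}q_{\phi_i}^2$ and then defers to the yaw-tracking argument of Theorem~\ref{thm:uav_position_tracking}, which is precisely the scalar surface-plus-exponential-decay construction you write out in full (including the gain identifications $\beta_9 = \alpha + k/2$, $\beta_{10} = k\alpha/2$). One remark in your favor: the paper's sketch defines $q_{\phi_i} = \ddot{e}_{\phi_i} + \alpha_9\dot{e}_{\phi_i} + \alpha_{10}e_{\phi_i}$, which is evidently a typo, since a second-order surface would produce a three-gain law involving $\dddot{e}_{\phi_i}$ (and hence the derivative of the input), whereas your first-order surface $q_{\phi_i} = \dot{e}_{\phi_i} + \alpha_7 e_{\phi_i}$ is the choice actually consistent with the boxed two-gain law and with the yaw proof the paper invokes.
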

\begin{proof}
Define the tracking errors as $e_{\phi_i} = \phi_{d_i} - \phi_i$, $e_{\theta_i} = \theta_{d_i} - \theta_i$ and $e_{\psi_i} = \psi_{d_i} - \psi_i$ respectively. Let $q_{\phi_i} = \ddot{e_{\phi_i}} + \alpha_9\dot{e}_{\phi_i} + \alpha_{10}e_{\phi_i}$ and $q_{\theta_i}, q_{\psi_i}$ defined similarly. The Lyapunov functions can be chosen as $V_{\phi_i} = \frac{1}{2}q^2_{\phi_i}$, $V_{\theta_i} = \frac{1}{2}q^2_{\theta_i}$ and $V_{\psi_i} = \frac{1}{2}q^2_{\psi_i}$ respectively. The rest of the proof is similar to the proof given in Theorem \ref{thm:uav_position_tracking}
(yaw angle tracking).
\end{proof}
Once $\mathbb{T}_i$ is found, Eq. \ref{eq:fl_T} can be rearranged to find $\overline{\tau_i}$, and Eq. (\ref{eq:fl_tau}) can be used to compute $\tau_i$, which is given to the payload-UAV system along with the total thrust input $f_{t_i}, \ i=1,\hdots,N$.

\begin{table}
    \centering
    \caption{Parameter values for the payload-UAV system }
    \label{tab:paramVals}
    \begin{tabular}{ccccc} \toprule
         & $m$ & $J_{xx}$ & $J_{yy}$ & $J_{zz}$ \\ \midrule
         \text{Payload} & 3.0 & 0.556 & 0.556 & 0.556 \\
         \text{UAVs} & 1.5 & 0.029 & 0.029 & 0.055 \\ \midrule
         & & $\rho_i$ & & $l_i$ \\ \midrule
         \text{UAV1} & & $[0.5 \ 0.5 \ -0.125]^T$ & & 3.2 \\
         \text{UAV2} & & $[0.5 \ \shortminus0.5 \ -0.125]^T$ & & 3.2 \\
         \text{UAV3} & & $[\shortminus0.5 \ \shortminus0.5 \ -0.125]^T$ & & 3.2 \\
         \text{UAV4} & & $[\shortminus0.5 \ 0.5 \ -0.125]^T$ & & 3.2 \\\bottomrule
    \end{tabular}
\end{table}

\section{Simulation Results}
\label{sec:results}

This section evaluates the performance of the proposed nonlinear controller for cooperative payload transport. In particular, a numerical simulation is conducted in \texttt{MATLAB} to validate the tracking performance of the proposed controller. Next, a high-fidelity \texttt{Gazebo} simulation is carried out to verify the real-time integrity of the controller. The controller is implemented in a \texttt{C++} script which communicates to the real-time model spawned in \texttt{Gazebo} via \texttt{ROS C++ APIs}. The model of the payload-UAV system is written in \texttt{urdf} format, and the \texttt{RotorS} package is used in conjunction with this model to simulate the propeller aerodynamics.

\footnotetext{A video demonstration for the \texttt{Gazebo} simulation can be found in \cite{nish2022supp_mat} or here: \href{https://youtu.be/ltCwnNr9nJ0}{https://youtu.be/ltCwnNr9nJ0}}

\subsection{Numerical Evaluation}
The parameters for the payload-UAV model is provided in Table \ref{tab:paramVals} and are same throughout the subsequent discussions. The constants have the values $\beta_1 = 6.5, \ \beta_2 = 26, \ \beta_3 = 28.5, \ \beta_4 = 15, \beta_5 = 4, \ $ $\beta_6 = 4, \ \beta_7 = 4.5, \ \beta_8 = 5, \ \beta_9 = 18$ and $\beta_{10} = 85$ respectively. For trajectory tracking, a sample circular trajectory is to be tracked at a height of $5m$ for both the payload-UAV model of Eq. (\ref{eq:v_0}) - (\ref{eq:w_i}) and the extended UAV model of Eq. (\ref{eq:ext_uav_model}):
\begin{subequations}
\begin{align}
    r_{0_{x_d}}(t) = r_{x_d} =& \ \text{sin}(t / 2) \\
    r_{0_{y_d}}(t) = r_{y_d} =& \ \text{cos}(t / 2) \\
    r_{0_{z_d}}(t) = r_{z_d} =& \ 5u(t)
\end{align}
\end{subequations}
where $u(t)$ is the standard unit step function.
\subsubsection{Single UAV tracking}
The tracking performance of the feedback linearized control law of Eq. (\ref{eq:ctrl_law_uav}) is shown in Fig. (\ref{fig:traj_track_uav}) and Fig. (\ref{fig:att_uav}). It can be seen that the trajectory of the UAV \textit{exponentially converges} to the desired trajectory after which the tracking error is insignificant. The roll and pitch angles vary between $\pm 2^o$ and the yaw angle is zero since $\psi_d = 0$.
\begin{figure}[H]
    \centering
    \includegraphics[scale=0.54]{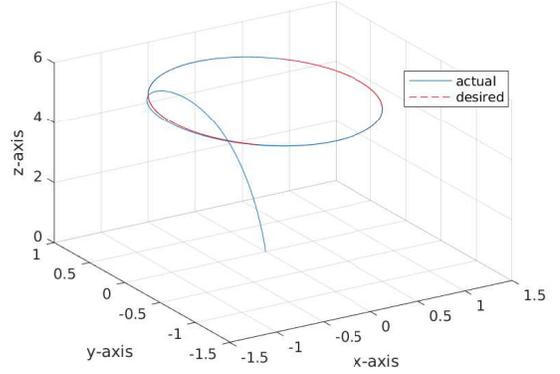}
    \caption{The reference trajectory for a single UAV is shown in red dotted line. The actual trajectory is shown in blue solid line.}
    \label{fig:traj_track_uav}
\end{figure}

\begin{figure}[!htb]
    \centering
    \includegraphics[scale=0.54]{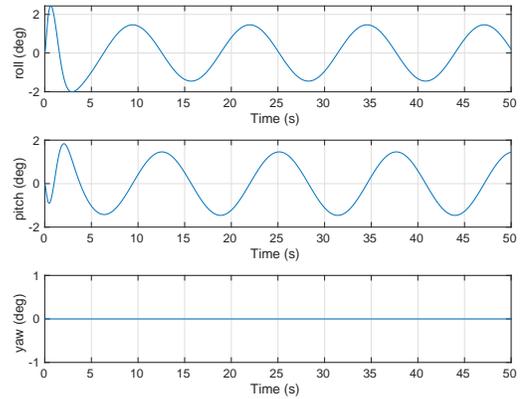}
    \caption{The attitude of the UAV throughout the simulation is shown in blue solid line.}
    \label{fig:att_uav}
\end{figure}
\begin{figure*}
    \centering
    \includegraphics[width=18cm, height=6cm]{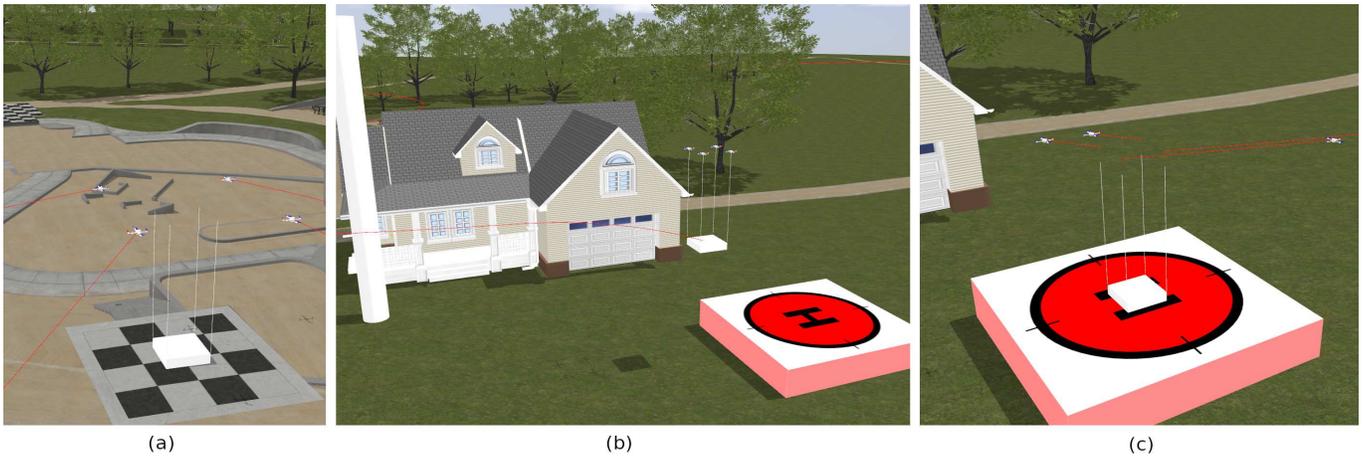}
    \caption[Caption for LOF]{Figure shows the three stages of the simulation: (a) shows the four UAVs that come together in formation to attach themselves to the rigid links. (b) shows the trajectory tracking of the payload-UAV system which has to land on the helipad. (c) shows the UAVs breaking apart and resuming their tasks once the payload lands on the helipad. The trajectories of the UAVs and the payload is highlighted in Red color in the respective figures \protect\footnotemark.}
    \label{fig:pay_uav_stages}
\end{figure*}
\subsubsection{Payload-UAV Tracking}
It is assumed that there are 4 independent UAVs that come together to attach to the rigid payload frame. Thereafter, the entire system dynamics changes to the payload-UAV dynamics as discussed previously. The control law $switches$ to Eq. (\ref{eq:ctrl_law_pl_uav}) from Eq. (\ref{eq:ctrl_law_uav}) and the tracking performance is shown in Fig. (\ref{fig:traj_track_pl_uav}) and the attitudes of the four UAVs are shown in Fig. (\ref{fig:att_uav_1}) - (\ref{fig:att_uav_4}).

\begin{figure}[!htb]
    \centering
    \includegraphics[scale=0.54]{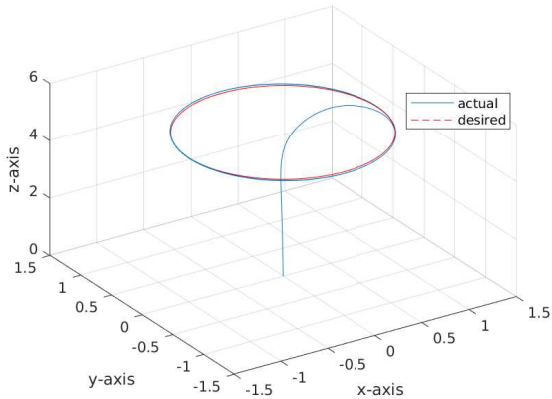}
    \caption{The reference trajectory is shown in red dotted lines, and the actual position of the payload is shown in solid blue line}
    \label{fig:traj_track_pl_uav}
\end{figure}
\begin{figure}[!htb]
    \centering
    \includegraphics[scale=0.54]{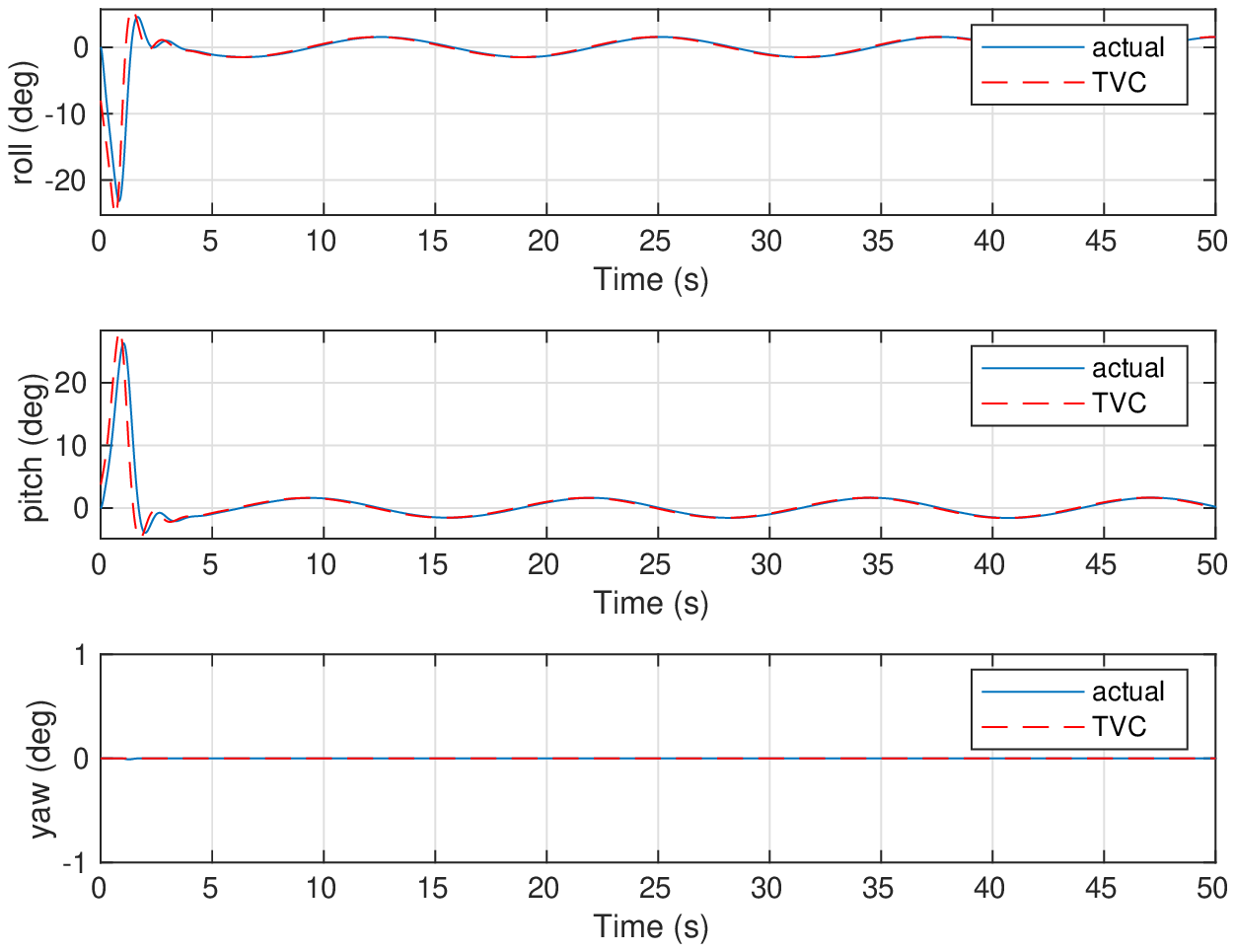}
    \caption{The reference attitude as provided by TVC is shown in red dotted lines. The attitude of UAV 1 is shown in blue solid lines.}
    \label{fig:att_uav_1}
\end{figure}
\begin{figure}[!htb]
    \centering
    \includegraphics[scale=0.54]{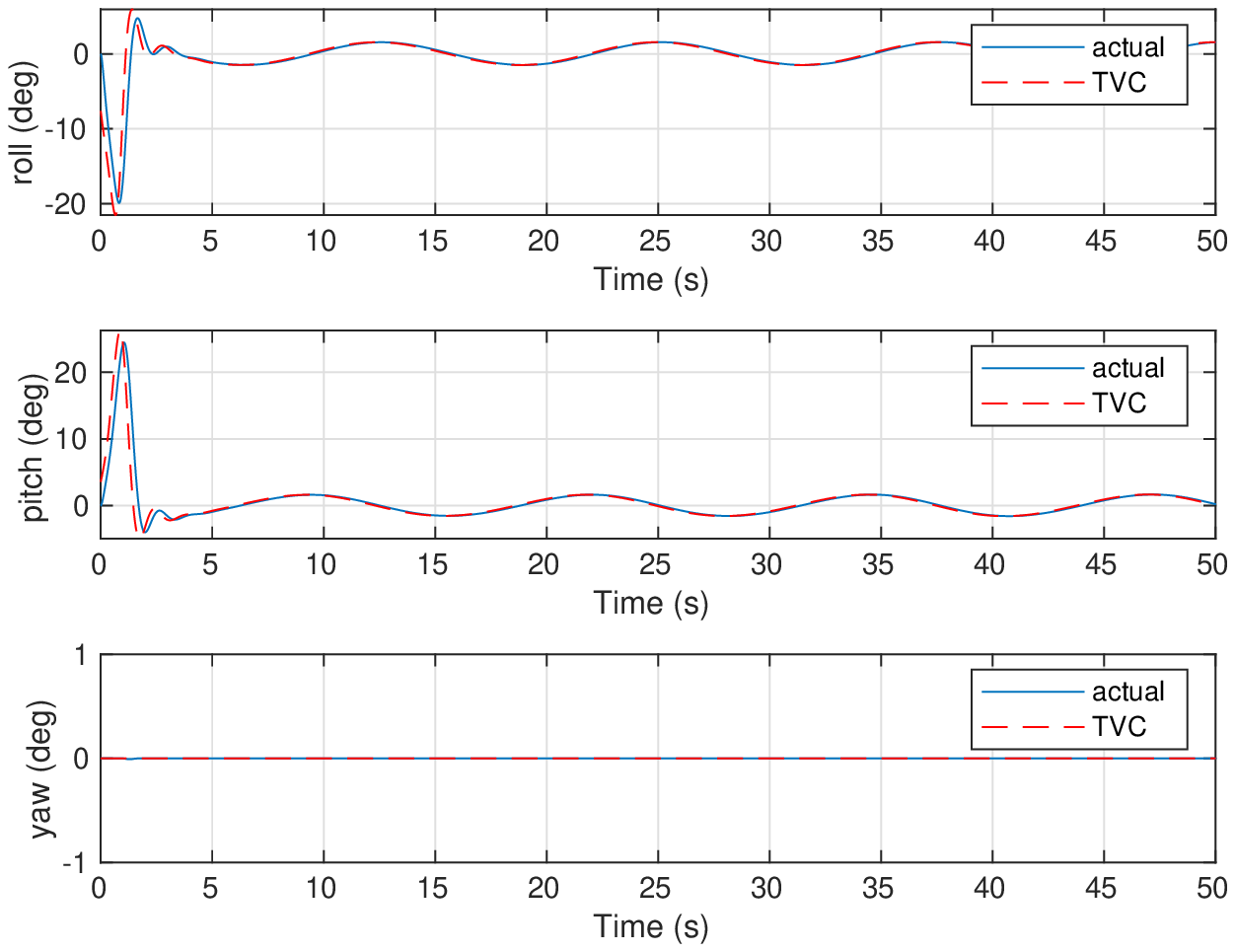}
    \caption{The reference attitude as provided by TVC is shown in red dotted lines. The attitude of UAV 2 is shown in blue solid lines.}
    \label{fig:att_uav_2}
\end{figure}
\begin{figure}[!htb]
    \centering
    \includegraphics[scale=0.54]{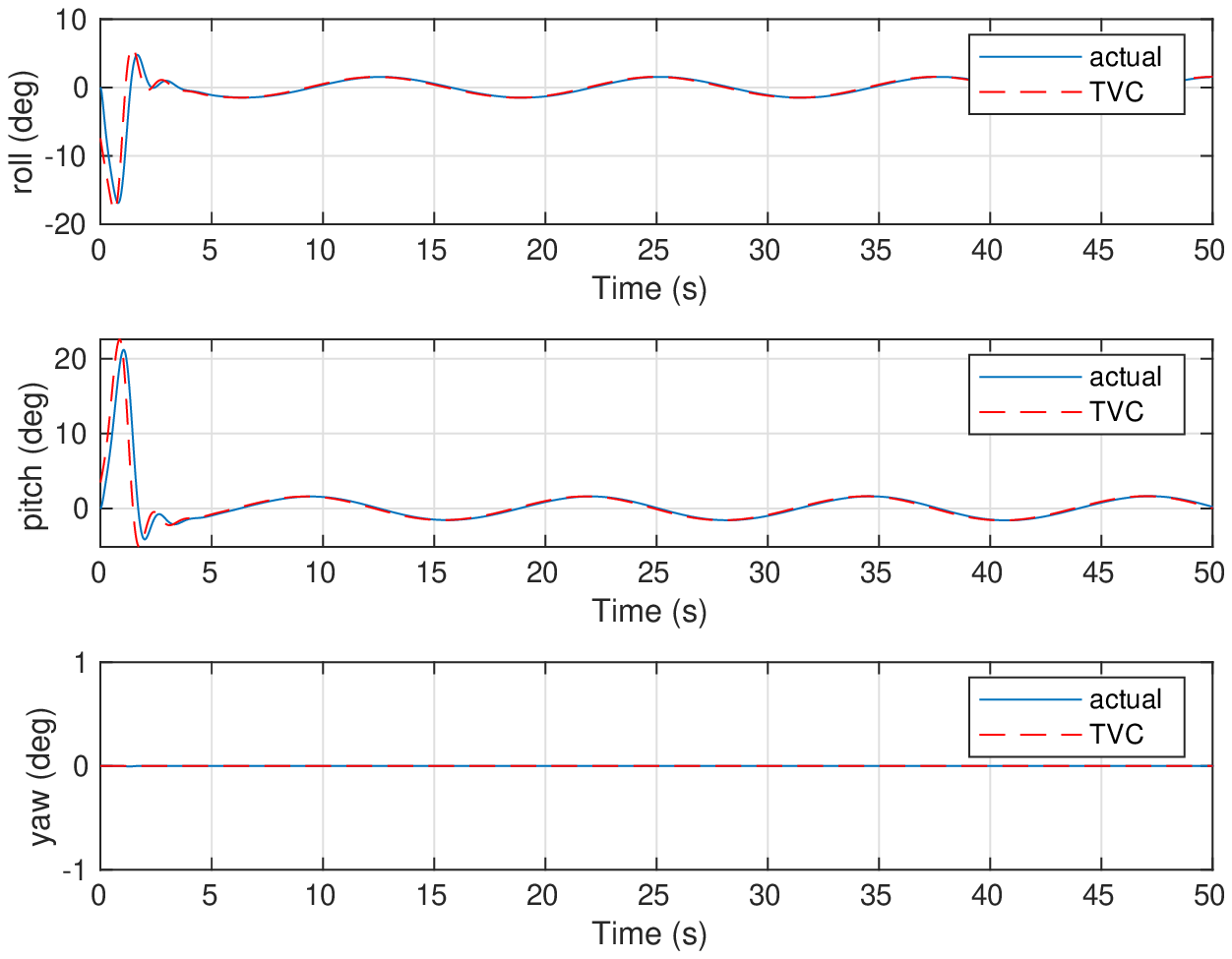}
    \caption{The reference attitude as provided by TVC is shown in red dotted lines. The attitude of UAV 3 is shown in blue solid lines.}
    \label{fig:att_uav_3}
\end{figure}
\begin{figure}[!htb]
    \centering
    \includegraphics[scale=0.54]{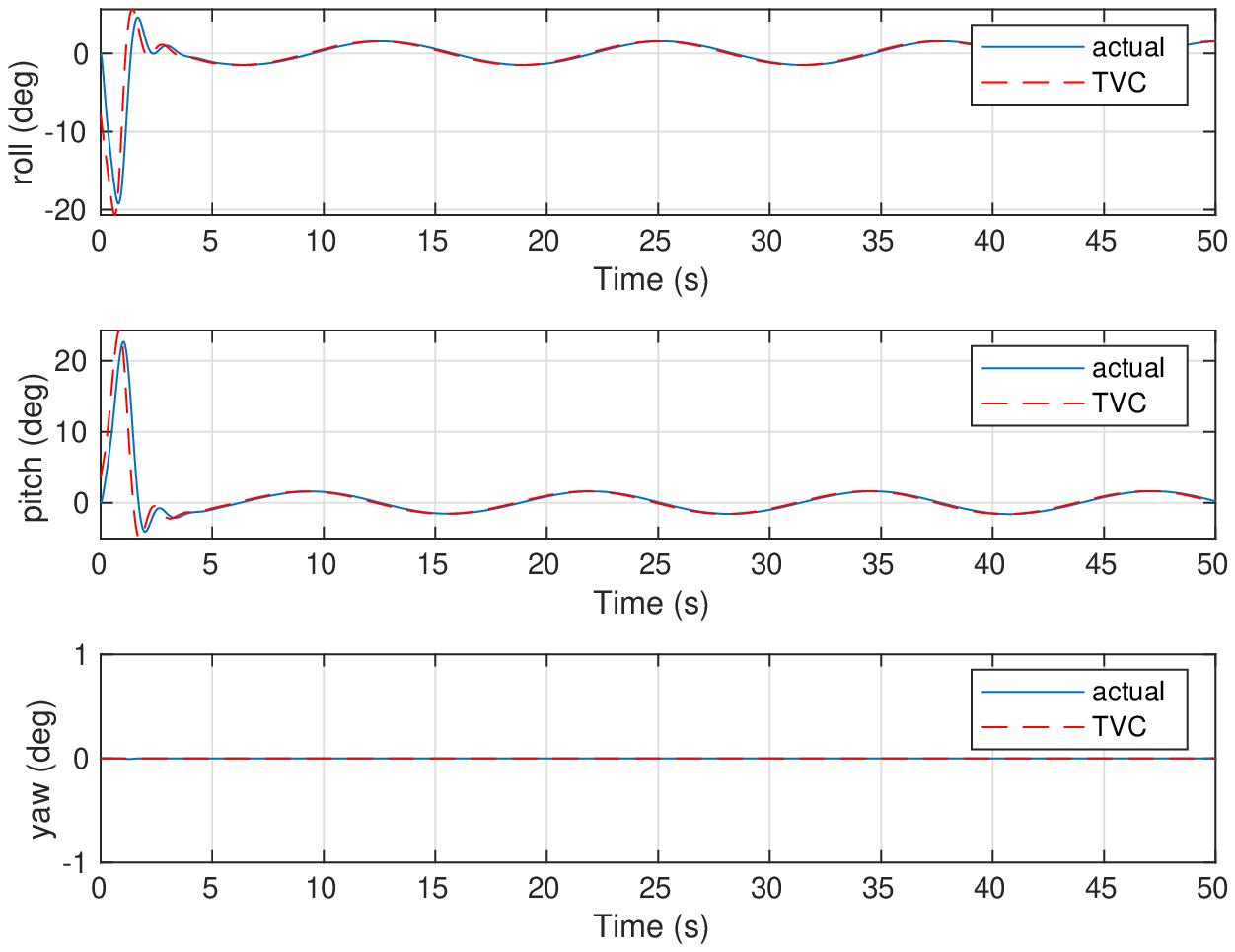}
    \caption{The reference attitude as provided by TVC is shown in red dotted lines. The attitude of UAV 4 is shown in blue solid lines.}
    \label{fig:att_uav_4}
\end{figure}
It is evident from the figures that the tracking controller of Eq. (\ref{eq:ctrl_law_pl_uav}) \textit{exponentially converges} to the desired reference trajectories. The desired output attitudes from the \textit{Thrust Vectoring Controller} (TVC) is shown in the attitude tracking figures of each UAVs. The desired yaw angles for all the UAVs are set to zero due to the physical constraint of the 2-DOF spherical joint attachment. The controller constants $\beta_i$'s are calculated after choosing appropriate values of $\alpha_i$'s and $k_i$'s. The $\alpha_i$'s must be chosen in a way such that the poles of the error differential equations (see proofs for the theorems) must lie in the negative left half plane. A higher value to these means more \textit{stable} control, but higher control effort. The value assigned to the $k_i$'s dictate the \textit{exponential decaying} of the error. Setting a very high value or a very low value to these constants can render the control law unstable. 
\subsection{Robustness Analysis}
In practice, there exists some unaccounted external influences from the environment like wind etc., that can affect the tracking performance of the controller. Further, there can be some uncertainties regarding the weight of the payload, and the exact value may not be known at the time of controller design. It is thus essential to analyze the effect of these external disturbances and uncertainties on the tracking performance of the proposed controller.
\subsubsection{External disturbance}
To test the robustness of the tracking controller, an external disturbance i.e., a gust of wind is generated along the positive $x-$axis, and this disturbance is applied directly to the payload. Note that the controller does not know the magnitude or the time at which this disturbance occur.
\begin{figure}[!htb]
    \centering
    \includegraphics[scale=0.5]{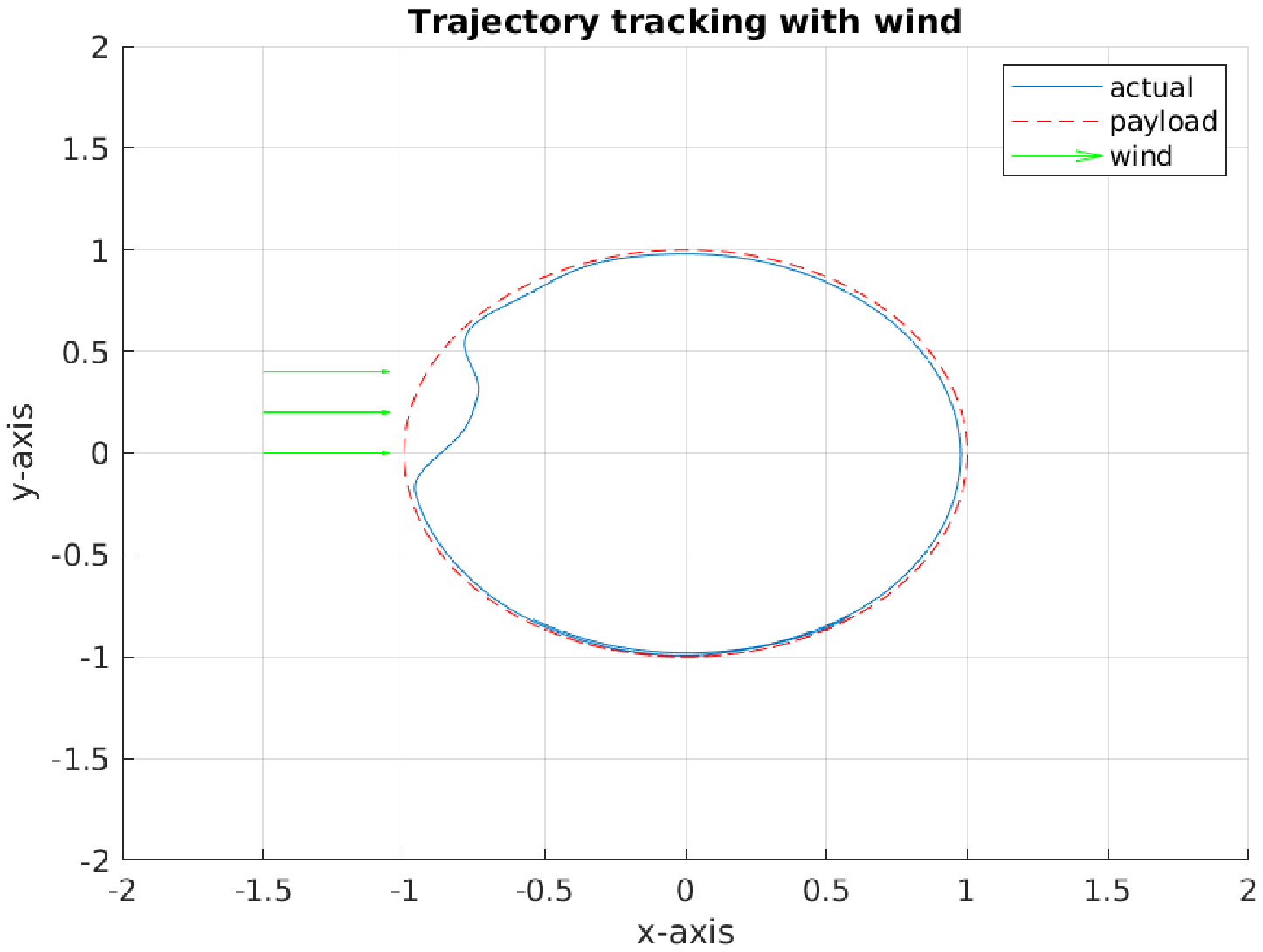}
    \caption{Caption}
    \label{fig:wind_traj_track}
\end{figure}
The tracking performance is shown in Fig. \ref{fig:wind_traj_track}, for a wind force of $5$N. The payload gets pushed along the positive x-axis, but the UAV's immediately counter this effect and get back the payload on track, to continue tracking the desired trajectory. The overall deviation is about $0.23$m from the desired trajectory.

\subsubsection{Mass Uncertainty}
In reality, there is some uncertainty between the actual payload mass and the reported payload mass. A variation of $\pm10\%$ in payload mass is considered for the actual mass of the payload i.e., the mass can be either $2.79kg$ or $3.41kg$ for the actual weight of $3.1kg$ as reported in Table \ref{tab:paramVals}. The uncertain mass values are taken during the controller design, however the plant equations have the exact mass of the payload. It is seen that the tracking performance doesn't change, however the control effort $\left\Vert \mathbb{U} \right\Vert_2$ changes by $\pm7\%$. This change mainly occurs in the thrust value of the UAVs, which is evident to compensate the weight difference of the payload. As long as the variation in mass falls under the lift capacity of the UAVs combined, the tracking performance remains the same.

\subsection{Hi-fidelity Software-in-Loop Simulation}
In order to evaluate realistic performance of the proposed control law, the high-fidelity \texttt{Gazebo} simulator is selected. The software-in-loop simulation evaluates the real-time performance of the controller and its computational efficiency. The controller is deployed at a frequency of $50$Hz in the 
simulation, and the following three stages occur:
\begin{itemize}
    \item The UAVs are spawned randomly around the payload-UAV frame. They must come in formation and join to the attachment points (above each link of the payload-frame) so that they can transport the payload from point $A$ to point $B$. Till this point, the control assumes the \textit{extended UAV dynamics} and follows Eq. (\ref{eq:ctrl_law_uav}),
    \item Once the attachment process is complete, the controller assumes the \textit{payload-UAV dynamics} and follows Eq. (\ref{eq:ctrl_law_pl_uav}). Here the UAVs ensure that the payload center of mass tracks the provided reference trajectory.
    \item Once delivered to the location, the UAVs disconnect themselves and continue doing their previously assigned work.
\end{itemize}
For this purpose, an environment is setup in \texttt{Gazebo} simulator that the payload-UAV system must navigate. The above three stages are illustrated in Fig. \ref{fig:pay_uav_stages}. In the beginning, the payload-frame is spawned at the location $[-71.5, -104.37, 0.0]^T$, and the four UAVs have to reach the locations $[-71, -103.87, 3.6]^T$, $[-71, -104.87, 3.6]^T$, $[-72, -104.87, 3.6]^T$ and $[-72, -103.87, 3.6]^T$ respectively, which are calculated based on the values of $\rho_i$'s as given in Table \ref{tab:paramVals}. Once the UAVs attach themselves to the payload frame, the payload is made to track a custom trajectory that goes around a few obstacles and lands on a helipad platform located at $[-38.073, -165.647, 0.0]^T$. Once landed, the UAVs disintegrate to various random locations around this point. The tracking performance of the UAVs (in the first stage) as well as the payload (in the second stage) are shown in Fig. \ref{fig:traj_track_UAV_1} - \ref{fig:traj_track_UAV_4} and Fig. \ref{fig:traj_track_pay} respectively. A video demonstration of the \texttt{Gazebo} simulation can be found in \cite{nish2022supp_mat} or \href{https://youtu.be/ltCwnNr9nJ0}{here}. 
\begin{figure}[!htb]
    \includegraphics[scale=0.54]{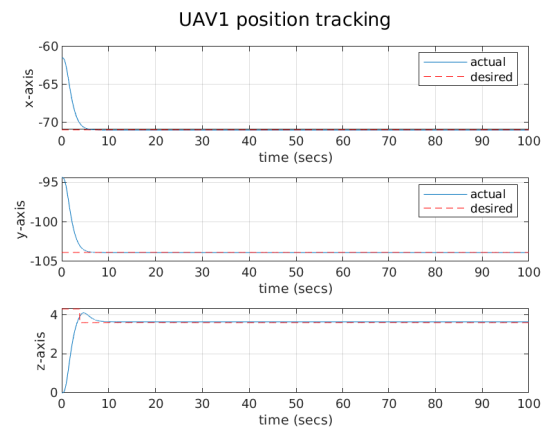}
    \caption{The reference trajectory (which is the point above link 1) for UAV 1 is shown in red dotted lines. The position of UAV 1 is shown in blue solid line.}
    \label{fig:traj_track_UAV_1}
\end{figure}
\begin{figure}[!htb]
    \includegraphics[scale=0.54]{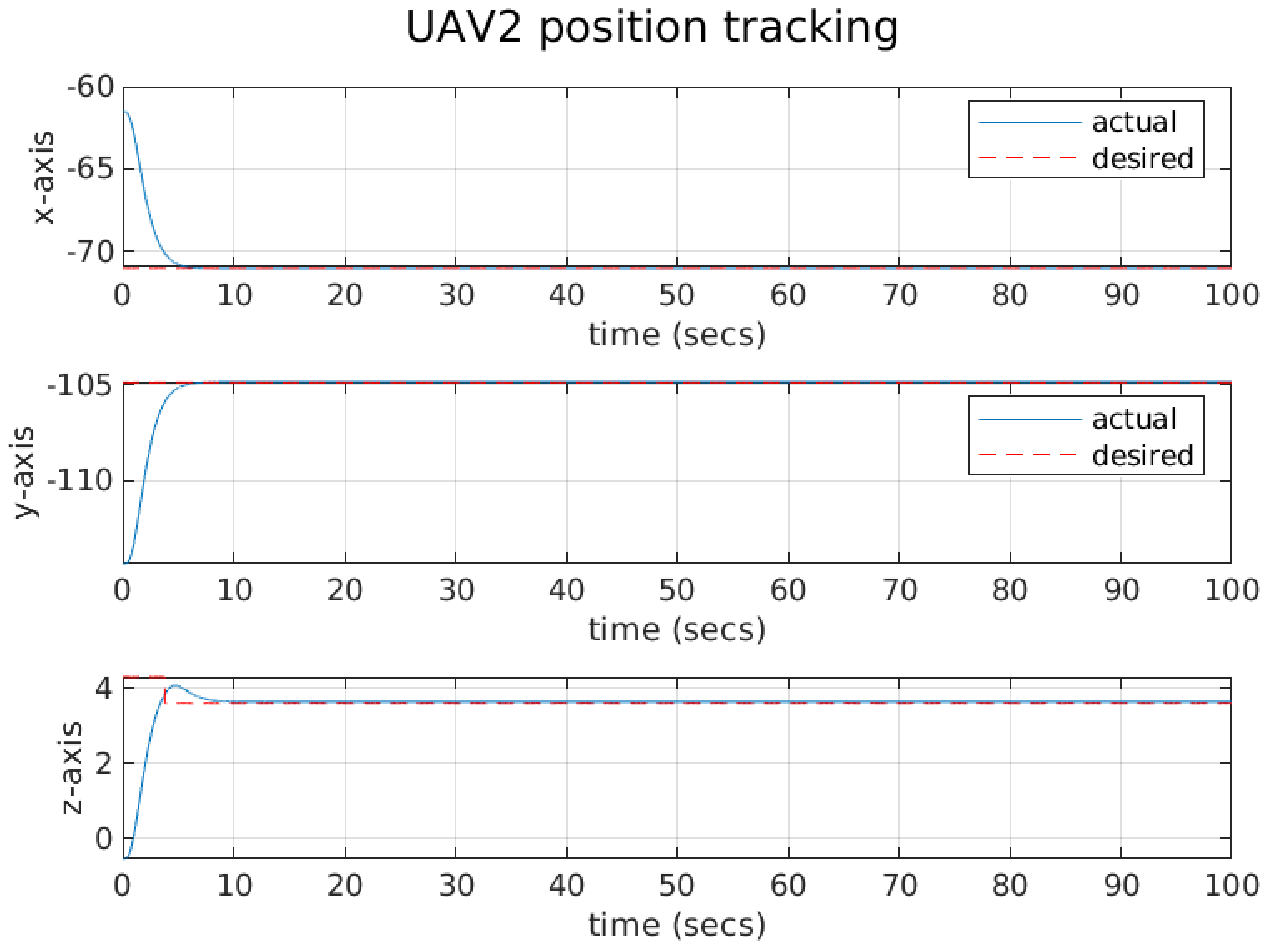}
    \caption{The reference trajectory (which is the point above link 2) for UAV 2 is shown in red dotted lines. The position of UAV 2 is shown in blue solid line.}
    \label{fig:traj_track_UAV_2}
\end{figure}
\begin{figure}[!htb]
    \includegraphics[scale=0.54]{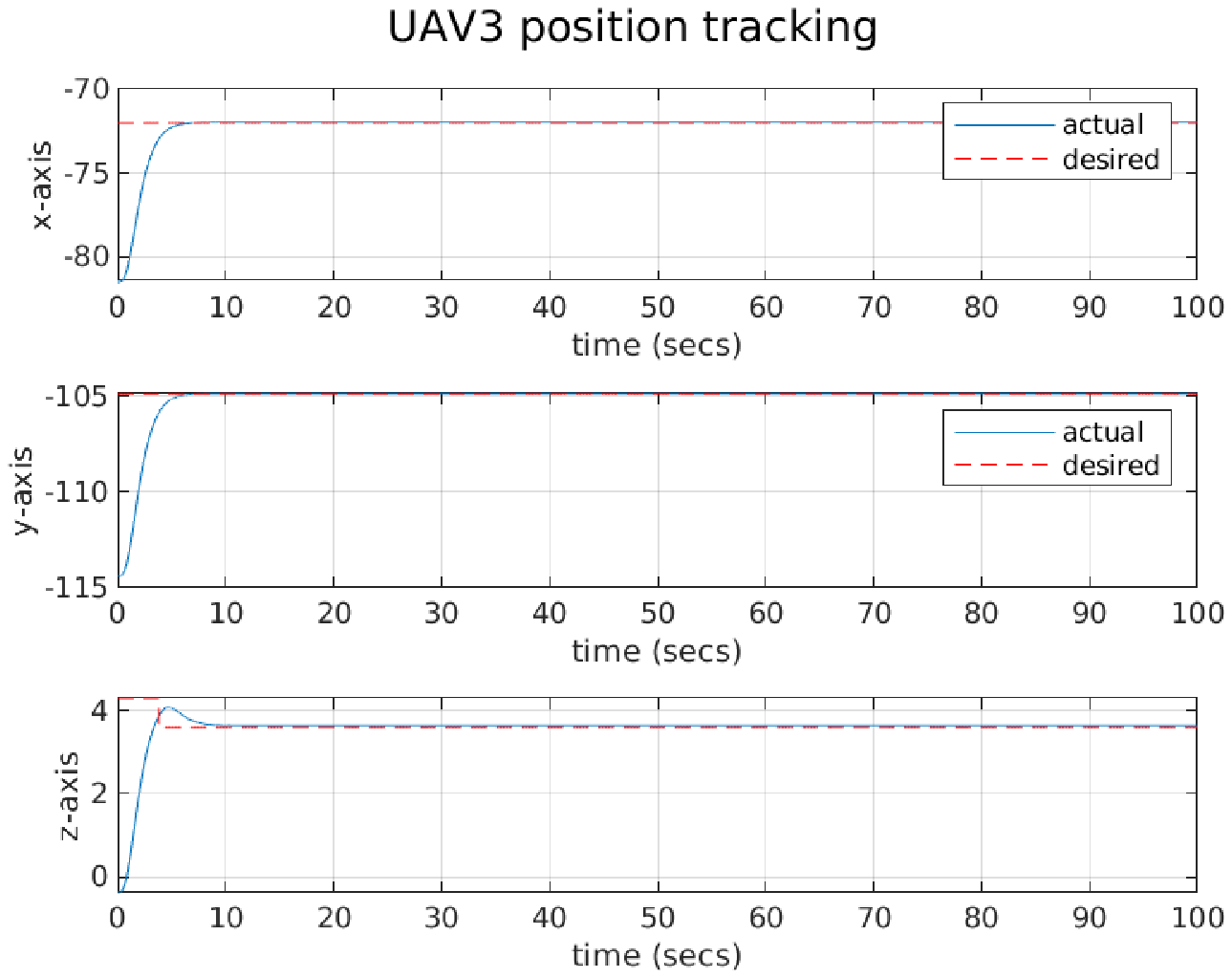}
    \caption{The reference trajectory (which is the point above link 3) for UAV 3 is shown in red dotted lines. The position of UAV 3 is shown in blue solid line.}
    \label{fig:traj_track_UAV_3}
\end{figure}
\begin{figure}[!htb]
    \includegraphics[scale=0.54]{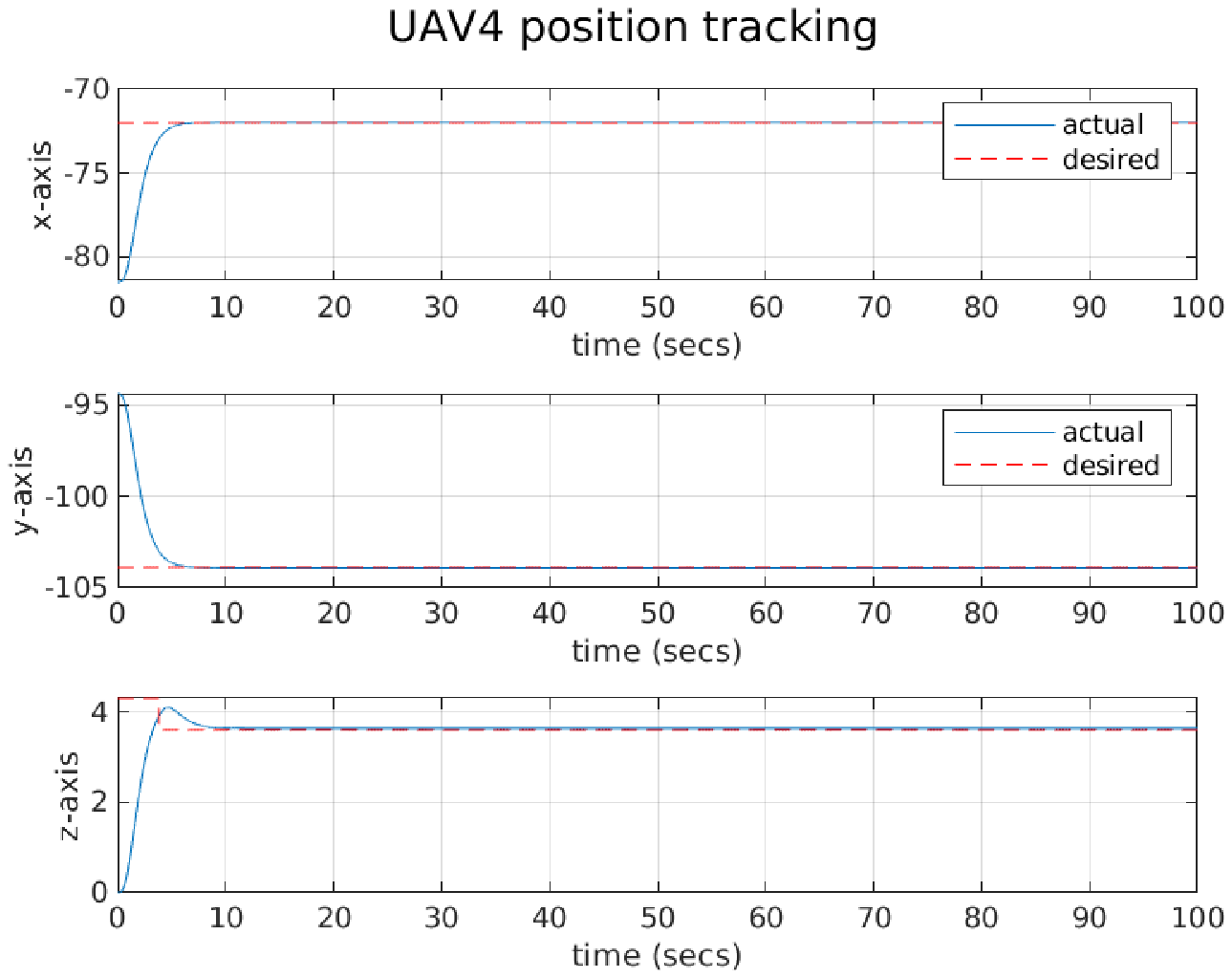}
    \caption{The reference trajectory (which is the point above link 4) for UAV 4 is shown in red dotted lines. The position of UAV 4 is shown in blue solid line.}
    \label{fig:traj_track_UAV_4}
\end{figure}
\begin{figure}[!htb]
    \centering
    \includegraphics[scale=0.54]{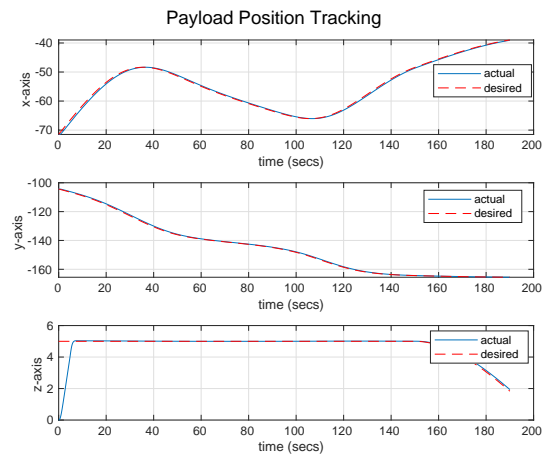}
    \caption{The reference trajectory for the payload is shown in red dotted line. The position of the payload is shown in blue solid line. The reference trajectory gradually descends to the helipad landing point.}
    \label{fig:traj_track_pay}
\end{figure}
From the UAV tracking figures, one can observe that the reference height is at $4.3$m, which is reduced to $3.6$m as soon as the UAVs come close to the attachment points' location. In other words, the UAVs are made to descend on top of the magnetic joints so that the magnetic force pulls the UAVs vertically down. If the UAVs come too close to the magnetic joints in any other direction, the UAVs can face a pulling force in some non-vertical direction which will cause it to unnecessarily roll/pitch.

\section{Conclusions}
\label{sec:future}
An exponentially stable trajectory tracking nonlinear controller is designed for a multi-UAV payload transport. The payload-UAV system consists of vertical massless links rigidly attached to the payload frame. The UAVs are connected to the links via a 2-DOF magnetic spherical joint. A novel input-output feedback linearization for combined multi-UAV and payload has been derived, and thrust vectoring control is proposed to provide stable trajectory tracking performance. The theoretical analysis clearly indicates that the proposed nonlinear control law is exponentially stable. Numerical simulation and a high-fidelity \texttt{Gazebo} simulation are conducted to validate the tracking performance of the controller and its deployment in a real-time scenario, making it computationally efficient. Robustness studies are conducted to analyse the tracking performance of the proposed controller under certain practical dubieties like an external disturbance on the payload and payload mass uncertainty. The proposed nonlinear controller is computationally efficient and can handle uncertainties coherently. Since the vertical links are rigidly attached to the payload frame, the payload can sway back and forth when the UAVs stop or move abruptly. Hence, one needs to generate a smooth and continuous desired trajectory. 
\section{Acknowledgements}
The authors would like to acknowledge the financial support from the Nokia CSR grant on Network Robotics.

\bibliographystyle{IEEEtran}
\bibliography{references}

\begin{thebibliography}{10}
\providecommand{\url}[1]{#1}
\csname url@samestyle\endcsname
\providecommand{\newblock}{\relax}
\providecommand{\bibinfo}[2]{#2}
\providecommand{\BIBentrySTDinterwordspacing}{\spaceskip=0pt\relax}
\providecommand{\BIBentryALTinterwordstretchfactor}{4}
\providecommand{\BIBentryALTinterwordspacing}{\spaceskip=\fontdimen2\font plus
\BIBentryALTinterwordstretchfactor\fontdimen3\font minus
  \fontdimen4\font\relax}
\providecommand{\BIBforeignlanguage}[2]{{%
\expandafter\ifx\csname l@#1\endcsname\relax
\typeout{** WARNING: IEEEtran.bst: No hyphenation pattern has been}%
\typeout{** loaded for the language `#1'. Using the pattern for}%
\typeout{** the default language instead.}%
\else
\language=\csname l@#1\endcsname
\fi
#2}}
\providecommand{\BIBdecl}{\relax}
\BIBdecl

\bibitem{bertram2019online}
J.~Bertram, X.~Yang, M.~W. Brittain, and P.~Wei, ``Online flight planner with
  dynamic obstacles for urban air mobility,'' in \emph{AIAA Aviation 2019
  Forum}, 2019, p. 3625.

\bibitem{bauranov2021designing}
A.~Bauranov and J.~Rakas, ``Designing airspace for urban air mobility: A review
  of concepts and approaches,'' \emph{Progress in Aerospace Sciences}, vol.
  125, p. 100726, 2021.

\bibitem{sreenath2013dynamics}
K.~Sreenath and V.~Kumar, ``Dynamics, control and planning for cooperative
  manipulation of payloads suspended by cables from multiple quadrotor
  robots,'' \emph{rn}, vol.~1, no.~r2, p.~r3, 2013.

\bibitem{lee2017geometric}
T.~Lee, ``Geometric control of quadrotor {UAV}s transporting a cable-suspended
  rigid body,'' \emph{IEEE Transactions on Control Systems Technology},
  vol.~26, no.~1, pp. 255--264, 2017.

\bibitem{rastgoftar2018cooperative}
H.~Rastgoftar and E.~M. Atkins, ``Cooperative aerial lift and manipulation
  (calm),'' \emph{Aerospace Science and Technology}, vol.~82, pp. 105--118,
  2018.

\bibitem{wehbeh2020distributed}
J.~Wehbeh, S.~Rahman, and I.~Sharf, ``Distributed model predictive control for
  {UAV}s collaborative payload transport,'' in \emph{2020 IEEE/RSJ
  International Conference on Intelligent Robots and Systems (IROS)}.\hskip 1em
  plus 0.5em minus 0.4em\relax IEEE, 2020, pp. 11\,666--11\,672.

\bibitem{rao2022integrated}
N.~Rao and S.~Sundaram, ``Integrated decision control approach for cooperative
  safety-critical payload transport in a cluttered environment,'' \emph{arXiv
  preprint arXiv:2201.13033}, 2022.

\bibitem{mellinger2013cooperative}
D.~Mellinger, M.~Shomin, N.~Michael, and V.~Kumar, ``Cooperative grasping and
  transport using multiple quadrotors,'' in \emph{Distributed autonomous
  robotic systems}.\hskip 1em plus 0.5em minus 0.4em\relax Springer, 2013, pp.
  545--558.

\bibitem{mistler2001exact}
V.~Mistler, A.~Benallegue, and N.~M'sirdi, ``Exact linearization and
  noninteracting control of a 4 rotors helicopter via dynamic feedback,'' in
  \emph{Proceedings 10th IEEE international workshop on robot and human
  interactive communication. Roman 2001 (Cat. no. 01th8591)}.\hskip 1em plus
  0.5em minus 0.4em\relax IEEE, 2001, pp. 586--593.

\bibitem{silva2018quadrotor}
J.~P. Silva, C.~De~Wagter, and G.~de~Croon, ``Quadrotor thrust vectoring
  control with time and jerk optimal trajectory planning in constant wind
  fields,'' \emph{Unmanned Systems}, vol.~6, no.~01, pp. 15--37, 2018.

\bibitem{nish2022supp_mat}
N.~Rao, ``{Supplementary Material for this paper},''
  \url{https://drive.google.com/drive/folders/1ASU1YWjOVZp5kzjpX9h4WE2W88JTJczX?usp=sharing},
  2022, [Online Google drive].

\bibitem{isidori1985nonlinear}
A.~Isidori, \emph{Nonlinear control systems: an introduction}.\hskip 1em plus
  0.5em minus 0.4em\relax Springer, 1985.

\bibitem{nijmeijer1990nonlinear}
H.~Nijmeijer and A.~J. Van~der Schaft, \emph{Nonlinear dynamical control
  systems}.\hskip 1em plus 0.5em minus 0.4em\relax Springer, 1990, vol. 175.

\bibitem{aghdam2016cooperative}
A.~S. Aghdam, M.~B. Menhaj, F.~Barazandeh, and F.~Abdollahi, ``Cooperative load
  transport with movable load center of mass using multiple quadrotor uavs,''
  in \emph{2016 4th International Conference on Control, Instrumentation, and
  Automation (ICCIA)}.\hskip 1em plus 0.5em minus 0.4em\relax IEEE, 2016, pp.
  23--27.

\bibitem{liu2021analysis}
Y.~Liu, F.~Zhang, P.~Huang, and X.~Zhang, ``Analysis, planning and control for
  cooperative transportation of tethered multi-rotor {UAV}s,'' \emph{Aerospace
  Science and Technology}, vol. 113, p. 106673, 2021.

\bibitem{gimenez2020control}
J.~Gimenez, L.~R. Salinas, D.~C. Gandolfo, C.~D. Rosales, and R.~Carelli,
  ``Control for cooperative transport of a bar-shaped payload with rotorcraft
  uavs including a landing stage on mobile robots,'' \emph{International
  Journal of Systems Science}, vol.~51, no.~16, pp. 3378--3392, 2020.

\bibitem{tagliabue2017collaborative}
A.~Tagliabue, M.~Kamel, S.~Verling, R.~Siegwart, and J.~Nieto, ``Collaborative
  transportation using {MAV}s via passive force control,'' in \emph{2017 IEEE
  International Conference on Robotics and Automation (ICRA)}.\hskip 1em plus
  0.5em minus 0.4em\relax IEEE, 2017, pp. 5766--5773.

\bibitem{goodarzi2015geometric}
F.~A. Goodarzi, ``Geometric nonlinear controls for multiple cooperative
  quadrotor {UAV}s transporting a rigid body,'' \emph{arXiv preprint
  arXiv:1508.03789}, 2015.

\bibitem{petersen2008matrix}
K.~B. Petersen, M.~S. Pedersen \emph{et~al.}, ``The matrix cookbook,''
  \emph{Technical University of Denmark}, vol.~7, no.~15, p. 510, 2008.

\bibitem{geng2020cooperative}
J.~Geng and J.~W. Langelaan, ``Cooperative transport of a slung load using
  load-leading control,'' \emph{Journal of Guidance, Control, and Dynamics},
  vol.~43, no.~7, pp. 1313--1331, 2020.

\bibitem{goodarzi2016stabilization}
F.~A. Goodarzi and T.~Lee, ``Stabilization of a rigid body payload with
  multiple cooperative quadrotors,'' \emph{Journal of Dynamic Systems,
  Measurement, and Control}, vol. 138, no.~12, 2016.

\bibitem{hegde2021multi}
A.~Hegde and D.~Ghose, ``Multi-uav collaborative transportation of payloads
  with obstacle avoidance,'' \emph{IEEE Control Systems Letters}, vol.~6, pp.
  926--931, 2021.

\bibitem{oishi2021autonomous}
K.~Oishi and T.~Jimbo, ``Autonomous cooperative transportation system involving
  multi-aerial robots with variable attachment mechanism,'' in \emph{2021
  IEEE/RSJ International Conference on Intelligent Robots and Systems
  (IROS)}.\hskip 1em plus 0.5em minus 0.4em\relax IEEE, 2021, pp. 6322--6328.

\bibitem{oishi2021cooperative}
K.~Oishi, Y.~Amano, and T.~Jimbo, ``Cooperative transportation with multiple
  aerial robots and decentralized control for unknown payloads,'' \emph{arXiv
  preprint arXiv:2111.01963}, 2021.

\end{thebibliography}
\nocite{*}

\end{document}